\documentclass{article}

% if you need to pass options to natbib, use, e.g.:
%     \PassOptionsToPackage{numbers, compress}{natbib}
% before loading neurips_2020

% ready for submission
% \usepackage{neurips_2020}
 %\usepackage{neurips_2021}
 \usepackage[final,nonatbib]{neurips_2021}

% to compile a preprint version, e.g., for submission to arXiv, add add the
% [preprint] option:
%     \usepackage[preprint]{neurips_2020}

% to compile a camera-ready version, add the [final] option, e.g.:
%     \usepackage[final]{neurips_2020}

% to avoid loading the natbib package, add option nonatbib:
%\usepackage[numbers,sort]{natbib}
%     \usepackage[nonatbib]{neurips_2020}

\usepackage[utf8]{inputenc} % allow utf-8 input
\usepackage[T1]{fontenc}    % use 8-bit T1 fonts
\usepackage{hyperref}       % hyperlinks
\usepackage{url}            % simple URL typesetting
\usepackage{booktabs}       % professional-quality tables
\usepackage{amsfonts}       % blackboard math symbols
\usepackage{nicefrac}       % compact symbols for 1/2, etc.
\usepackage{microtype}      % microtypography

\usepackage{xcolor}   

\usepackage{subfig}

\usepackage{multirow}

\usepackage{pifont}% http://ctan.org/pkg/pifont
\usepackage{multicol}
\usepackage{graphicx}
\usepackage{mathtools}
\usepackage{amssymb}
\usepackage{amsthm}
\usepackage{wrapfig,hyperref}
\usepackage{algorithm}
\usepackage{algorithmic}

\newtheorem{thm}{Theorem}

\newtheorem*{prop*}{Proposition}

 % "letter-numbered" theorems

\theoremstyle{definition}

\DeclareMathOperator{\Tr}{Tr}

\DeclareMathOperator{\kurt}{kurt}
\DeclareMathOperator{\diag}{diag}
\DeclareMathOperator{\ddiag}{ddiag}

\newcommand{\argmax}[1]{\underset{#1}{\operatorname{arg}\,\operatorname{max}}\;}
\newcommand{\argmin}[1]{\underset{#1}{\operatorname{arg}\,\operatorname{min}}\;}

\renewcommand{\c}{{\bf c}}

\newcommand{\h}{{\bf h}}

\newcommand{\s}{{\bf s}}
\renewcommand{\u}{{\bf u}}

\newcommand{\x}{{\bf x}}
\newcommand{\y}{{\bf y}}
\newcommand{\z}{{\bf z}}

\newcommand{\A}{{\bf A}}
\newcommand{\B}{{\bf B}}
\newcommand{\C}{{\bf C}}

\renewcommand{\H}{{\bf H}}
\newcommand{\I}{{\bf I}}
\newcommand{\K}{{\bf K}}

\newcommand{\M}{{\bf M}}

\newcommand{\D}{{\bf D}}

\renewcommand{\P}{{\bf P}}

\newcommand{\R}{\mathbb{R}}
\renewcommand{\S}{{\bf S}}
\newcommand{\U}{{\bf U}}

\newcommand{\W}{{\bf W}}
\newcommand{\X}{{\bf X}}
\newcommand{\Y}{{\bf Y}}
\newcommand{\Z}{{\bf Z}}

\newcommand{\Thet}{\boldsymbol{\Xi}}
\newcommand{\bGamma}{\boldsymbol{\Gamma}}
\newcommand{\bLambda}{\boldsymbol{\Lambda}}
\newcommand{\bPi}{\boldsymbol{\Pi}}

\renewcommand{\H}{{\bf H}}
\renewcommand{\P}{{\bf P}}
\renewcommand{\S}{{\bf S}}

\newcommand{\Real}{{\mathbb R}}

\title{A Normative and Biologically Plausible Algorithm for Independent Component Analysis}

\author{Yanis Bahroun$^{\,1,2}$    \hspace{25pt}   Dmitri B.\ Chklovskii$^{\,1,4}$  \hspace{25pt}    Anirvan M.\ Sengupta$^{\,2,3,5}$  %\vspace{14pt}
   \\
   $^{1\,}$Center for Computational Neuroscience, Flatiron Institute 
   \\  $^{2\,}$Center for Computational Mathematics, Flatiron Institute
   \\  $^{3\,}$Center for Computational Quantum Physics, Flatiron Institute
   \\  $^{4\,}$Neuroscience Institute, NYU Medical Center 
   \\  $^{5\,}$Department of Physics and Astronomy, Rutgers University
   \\
   %\vspace{5pt}\\
   \texttt{\{ybahroun,dchklovskii\}@flatironinstitute.org}~~\texttt{anirvans.physics@gmail.com }
}

\begin{document}

\maketitle

\begin{abstract}
The brain effortlessly solves blind source separation (BSS) problems, but the algorithm it uses remains elusive. In signal processing, linear BSS problems are often solved by Independent Component Analysis (ICA). To serve as a model of a biological circuit, the ICA neural network (NN) must satisfy at least the following requirements: 1. The algorithm must operate in the online setting where data samples are streamed one at a time, and the NN computes the sources on the fly without storing any significant fraction of the data in memory. 2. The synaptic weight update is local, i.e., it depends only on the biophysical variables present in the vicinity of a synapse. Here, we propose a novel objective function for ICA from which we derive a biologically plausible NN, including both the neural architecture and the synaptic learning rules. Interestingly, our algorithm relies on modulating synaptic plasticity by the total activity of the output neurons. In the brain, this could be accomplished by neuromodulators, extracellular calcium, local field potential, or nitric oxide. 
\end{abstract}

\section{Introduction}

In the brain, visual, auditory, and olfactory systems effortlessly identify latent sources from their mixtures \cite{desimone1995neural,wilson2006early,bee2008cocktail,mcdermott2009cocktail}. In unsupervised learning, such task is known as blind source separation (BSS) \cite{comon2010handbook}. BSS is often solved by Independent Component Analysis (ICA) \cite{comon1994independent,hyvarinen2000independent}, which assumes a generative model, wherein the observed stimuli are linear combinations of independent sources. ICA algorithms determine the linear transformation back from the observed stimuli into their original sources without knowing how they were mixed in the first place.

Developing a biologically plausible ICA algorithm may provide critical insight into neural computational primitives because ICA may be implemented throughout the brain. In particular, receptive fields of V1 neurons may be the result of performing ICA on natural images \cite{bell1996edges,bell1997independent}. Similarly, ICA may account for the receptive fields in the auditory system \cite{lewicki2002efficient}. Moreover, the neural computational primitives used in the visual and the auditory cortex may be similar, as evidenced by anatomical similarity and by developmental experiments where auditory cortex neurons acquire V1-like receptive fields when visual inputs are redirected there \cite{sharma2000induction}. Therefore, ICA may serve as a computational primitive underlying learning throughout the neocortex. 

The majority of existing ICA algorithms\cite{comon2010handbook}, for example, information-theoretic ones \cite{bell1995information,amari1995new,pham1997blind,lee1998independent,lee2000unifying} do not meet our biological plausibility requirements. 
In this work, for the biological plausibility of neural networks (NN), we require that i) they operate in the online (or streaming) setting, namely, the input dataset is streamed one data vector at a time, and the corresponding output must be computed without storing any significant fraction of the dataset in memory, ii) the weights of synapses in an NN must be updated using local learning rules, i.e., they depend only on the biophysical variables present in only the two neurons that the synapse connects or extracellular space near the synapse. Most existing bio-inspired ICA NNs \cite{karhunen1997class,amari1998adaptive,hyvarinen1998independent,cichocki1999neural,hyvarinen2000independent} operate online but, when extracting multiple components, rely on non-local learning rules, i.e., a synapse needs to ``know'' about the individual activities of neurons other than the two it connects.

More biologically plausible ICA algorithms with local learning rules are limited to sources whose kurtosis deviates from the normal distribution in the same direction, are  hand-crafted (ad hoc), and lack good theoretical guarantees\cite{foldiak1990forming,linsker1997local,cichocki1999neural}. An alternative to analyzing NNs with ad hoc learning rules is the normative approach. In the normative approach, an optimization problem with known offline solution is used as a starting point to derive online optimization algorithm which maps onto an NN with local learning rules. Such a normative approach led to the development of more biologically plausible ICA networks but only in limited settings of either nonnegative ICA \cite{pehlevan2017blind,lipshutz2020bionica} or bounded component analysis \cite{simsek2019online,erdogan2020blind} 

%Since analyzing the outcome of NN with arbitrary learning rules is difficult, an alternative, so-called, normative approach has been developed. In the normative approach, an optimization problem with known offline solutions is formulated and then mapped onto an NN. 
In this work, we develop a biologically plausible ICA neural network inspired by kurtosis-based ICA methods \cite{cardoso1989source,hyvarinen2000independent,miettinen2015fourth}. 
Specifically, we take inspiration in the Fourth-Order Blind Identification (FOBI) procedure which separates sources with distinct kurtosis \cite{cardoso1989source}.
In this context, distributions are often distinguished depending on their kurtosis relative to a Gaussian distribution, i.e., super- and sub-Gaussian distribution known respectively as leptokurtic (``spiky'') and platykurtic (``flat-topped''). 
Our normative approach is based on a novel similarity-preserving objective for ICA with an intuitive geometric interpretation.
We reformulate this objective as a min-max optimization problem and solve it online by stochastic gradient optimization. We demonstrate that our algorithm performs well on synthetic datasets, audio signals, and natural images.

Our online algorithm maps onto a single-layer NN that can separate independent sources without pre-processing. The synaptic weights in our NN are updated using local learning rules, extending more conventional Hebbian learning rules by a time-varying modulating factor, which is a function of the total output activity. The presence of such a modulating factor suggests a role of the extracellular environment on synaptic plasticity. Modulation of the plasticity rules by overall output activity agrees with several experimental studies that have reported that a third factor, in addition to pre- and post-synaptic activities, can play a crucial role in modulating the outcome of Hebbian plasticity. This could be accomplished by neuromodulators \cite{hayama2013gaba,reynolds2001cellular,salgado2012noradrenergic,henneberger2010long}, extracellular calcium \cite{coenen2001cerebellar}, local field potential \cite{weckstrom2010extracellular}, or nitric oxide \cite{huang1997synaptic,holscher1997nitric,hardingham2013role}. 
\begin{figure}[!ht]
\centering
    %
    %\makebox[\textwidth][c]{\includegraphics[width=1\textwidth]{{Yanis_03.pdf}}}%

    \includegraphics[width=1\textwidth]{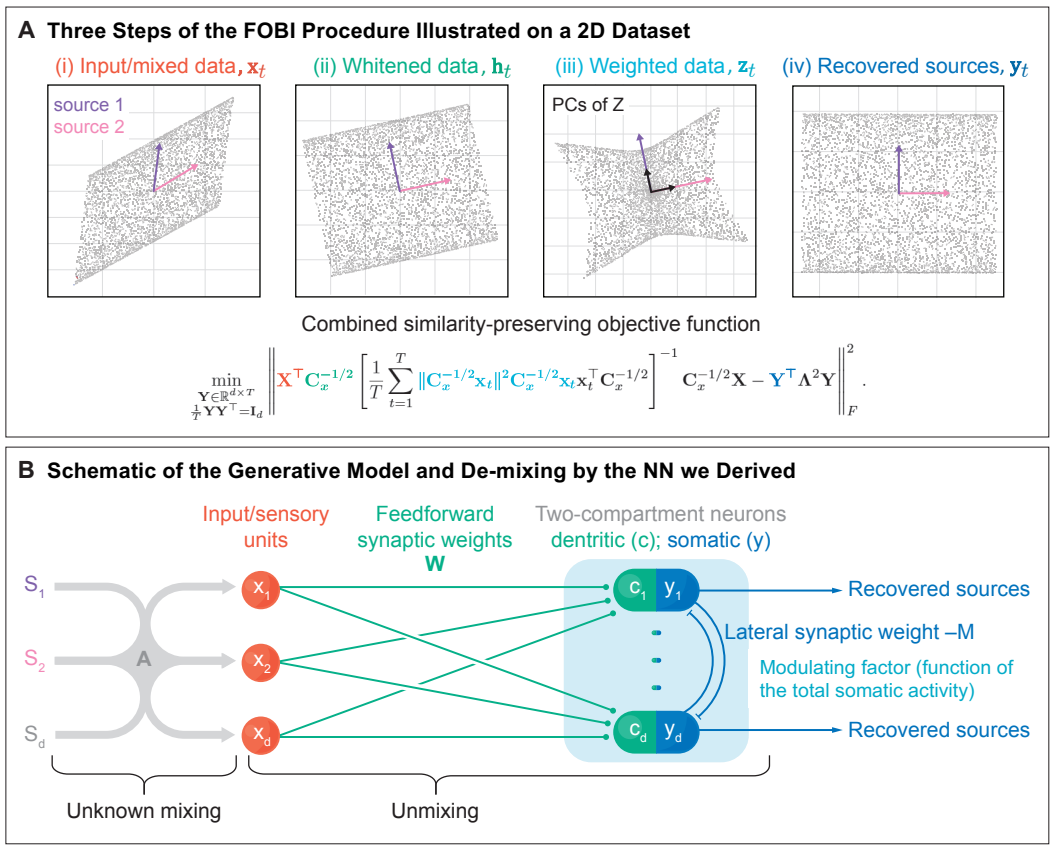} %Cartoon_approach_05222021_v1
    %\vspace{-5mm}
    \caption{\textbf{A.} {\bf The three steps of the FOBI procedure illustrated on a 2D dataset.} The purple and pink arrows show the axes of the independent sources $s_i$. \textbf{(i)} the observed signal, $\x_t$, \textbf{(ii)} the whitened data, $\h_t$, \textbf{(iii)} norm-weighted whitened data, $\z_t$, black arrows represent the principal directions of $\z_t$ \textbf{(iv)} the recovered sources, $\y_t$ are projections of $\h_t$ onto the principal directions of $\z_t$. These three steps are combined into a single objective function as indicated by color-coding. 
    \textbf{B.} {\bf Schematic of the generative model and de-mixing by the NN we derived.} The output layer consists of two-compartment neurons whose synapses obey local learning rules. The dendritic compartments of the output neurons whiten data. The somatic compartments reconstruct the sources by rotating the whitened data. The pale blue rounded rectangle represents modulation of plasticity by output activity. } \label{fig:normative_approach_panels}
    
\end{figure}

%%%%%%%%%%%%%%%%%%%%%%%%%%%%%%%%%%%%%%%%%%%%%%%%%%%%%%%%%%%%%%%%%
%%%%%%%%%%%%%%%%%%%%%%%%%%%%%%%%%%%%%%%%%%%%%%%%%%%%%%%%%%%%%%%%%
%%%%%%%%%%%%%%%%%%%%%%%%%%%%%%%%%%%%%%%%%%%%%%%%%%%%%%%%%%%%%%%%%

\section{Problem statement and inspiration}\label{sec:pb_statement}

The problem of BSS consists of recovering a set of unobservable source signals from observed mixtures. When mixing is linear, BSS can be solved by ICA, which decomposes observed random vectors into statistically independent variables.

Mathematically, ICA assumes the following generative model. There are $d$ sources recorded $T$ times forming the columns of $\S := [\s_1,\ldots ,\s_T] \in \R^{d\times T}$ whose components $s^1_t,\ldots,s^d_t$ are assumed non-Gaussian and independent. Without loss of generality, we assume that each source has zero-mean, unit variance, and finite kurtosis. We also assume that sources have distinct kurtosis as is commonly done in kurtosis-based ICA methods \cite{miettinen2015fourth}. The kurtosis of a random variable $v$ is defined as {\small $\kurt [v]= \mathbb{E} \left[(v-\mathbb{E}(v))^{4}\right] / \left(\mathbb{E} \left[(v-\mathbb{E}(v) )^{2}\right]\right)^{2}$}. Finally, sources are linearly mixed, i.e., there exists a full rank mixing matrix, $\A \in \R^{d\times d}$, producing the $d$-dimensional mixture, $\x_t$: 
\begin{align}\label{eq:gen_model}
\x_t  = \A \s_t \qquad \forall t \in \{1,\ldots,T\}~~.
\end{align}
Then the goal of ICA algorithms is to determine a linear transformation of the observed signal, $\W_{ICA} \in \R^{d \times d}$, such that
\begin{align}\label{eq:std_obj_ica}
   \y_t := \W_{ICA} \x_t , ~~~~  \forall t \in \{1,\ldots,T\}~~,
\end{align}
recovers unknown sources possibly up to a permutation and a sign flip.

\subsection{Review of the FOBI procedure}\label{subsec:FOBI} 

FOBI algorithm exploits a connection between ICA and Principal Component Analysis (PCA) \cite{pearson1901liii,jolliffe1986principal} pointed out in \cite{cardoso1989source,zhang2006ica}. We reproduce the proof of source recovery by FOBI from \cite{cardoso1989source,zhang2006ica} in Appendix A.

\textbf{Description of the procedure.} FOBI procedure consists of three steps, see \textbf{Fig.~\ref{fig:normative_approach_panels}A}. First, the data must be whitened, i.e., all components become decorrelated and of unit variance, \textbf{Fig.~\ref{fig:normative_approach_panels}A(ii)}. The whitening step can be performed using sample covariance, $\C_x = \frac1T \sum_{t=1}^T \x_t \x_t^\top$, as follows:
\begin{align*}
\textbf{Step } \textbf{1: }\text{whiten} ~~~~  \h_t = \C_x^{-1/2} \x_t ~~.
\end{align*}
Data whitening is often the first step in many ICA algorithms because recovering the sources after whitening corresponds to finding an orthogonal rotation matrix \cite{karhunen1997class}. Various ICA methods differ in how the rotation is chosen. In FOBI, the whitened data, $\h_t$, is scaled by its norm and termed $\z_t$. Then, the directions of $\h_t$ and $\s_t$ with distinct kurtosis are recovered by finding the eigenvectors of the sample covariance matrix $\frac1T \sum_{t=1}^T \z_t \z_t^\top$, \textbf{Fig.~\ref{fig:normative_approach_panels}A(iii)}.
\begin{align*}
& \textbf{Step } \textbf{2a: }  \text{transform} ~  \z_t = \| \h_t\| \cdot \h_t ~; ~\textbf{Step } \textbf{2b: }  \text{optimize} ~ \W_z := \argmax{\substack{\W \in\R^{d\times d}\\ \W \W^\top = \I_d }} \frac{1}{T} \Tr\left(\W \sum_{t=1}^T \z_t \z_t^\top\W^\top \right) . 
\end{align*}
Finally, to recover the sources, we  project the whitened data, $\h_t$, onto the rows of $\W_z$, \textbf{Fig.~\ref{fig:normative_approach_panels}A(iv)}
\begin{align*}
&\textbf{Step } \textbf{3: }\text{project} ~~~~ \y_t=\W_z \h_t ~~. 
\end{align*}

\textbf{Can a biologically plausible NN implement the FOBI algorithm? }  
The first two steps of FOBI do not present a problem with a biological implementation. For example, \textbf{Step 2b}, essentially a PCA, can be solved by a stochastic gradient ascent algorithm using Oja's learning rule \cite{oja1992principal}: 
\begin{eqnarray}\label{eq:Oja}
\Delta \W_z  = \eta \left(  \u_t \z_t^\top - \u_t \u_t^\top \W_z \right) ~~ . 
\end{eqnarray} 
Then, \textbf{Step 2b} can be mapped onto a single-layer network with upstream neurons' activity encoding $\z_t$, and the output neurons computing the components of $\u_t:=\W_z \z_t$ where the elements of $\W_z$ are encoded in the weights of feedforward synapses. Eq.\eqref{eq:Oja} gives the weight update for the feedforward synapses with the learning rate, $\eta>0$.  However, according to \textbf{step 3}, the final output of FOBI must be $\y_t$, obtained by multiplying the whitened inputs, $\h_t$, by $\W_z$ without scaling them by their norm. Such output may be computed by another single-layer network with the same feedforward synaptic weights, $\W_z$, but that would require weight-sharing (or weight transport). Alternatively, avoiding weight transport would require a non-local update rule for $\W_z$. Thus, both alternatives lead to biologically implausible solutions.

\subsection{Similarity matching for principal subspace analysis}

To find a biologically plausible implementation of FOBI, we follow the approach used previously to derive biologically plausible networks for Principal Subspace Projection (PSP) \cite{van1993subspace}, a variant of PCA, and other tasks\cite{pehlevan2015normative,Pehlevan2017why,minden2018biologically,pehlevan2014hebbian,bahroun2017neural,sengupta2018manifold,bahroun2019transformation,pehlevan2020neurons,lipshutz2020biologically,bahroun2017online,qin2021contrastive,bahroun2021neural}. This approach starts from reformulating the optimization objective for PSP in the so-called Similarity Matching (SM) form \cite{mardia1980multivariate}:  
\begin{align}\label{eq:std_SM_obj}
\min_{\Y\in \R^{m\times T}} \left\| \X^\top \X - \Y^\top\Y \right\|_F^2 ~~,
\end{align}
where $\X := [\x_1, \ldots, \x_T]$ is the data matrix, $\Y := [\y_1, \ldots, \y_{T}]$ is the output matrix, and $\|\cdot \|_F$ the Frobenius norm. In turn, the objective \eqref{eq:std_SM_obj} can be optimized by an online algorithm that maps onto a single-layer network of linear neurons whose synapses obey local learning rules \cite{Pehlevan2017why}.

Whereas the SM approach leads to biologically plausible NNs for solving eigenproblems, it was still unclear how to overcome the weight transport challenge arising in the FOBI algorithm implementation. In the next Section, we address this challenge by introducing a novel SM objective for ICA.

%%%%%%%%%%%%%%%%%%%%%%%%%%%%%%%%%%%%%%%%%%%%%%%%%%%%
%%%%%%%%%%%%%%%%%%%%%%%%%%%%%%%%%%%%%%%%%%%%%%%%%%%%
%%%%%%%%%%%%%%%%%%%%%%%%%%%%%%%%%%%%%%%%%%%%%%%%%%%%

\section{A similarity-preserving objective for ICA}\label{sec:Objective}

To derive a single-layer NN for ICA, which can be trained with local learning rules, we adopt a normative approach. We design a novel objective function, the solution of which projects the whitened data, $\h_t$, onto the eigenvectors of the covariance of $\z_t$ as specified in the FOBI procedure above using the SM approach. Specifically, we  propose the following generalized nonlinearly weighted similarity matching objective using the notations, $\H:=[\h_1,\ldots,\h_T]$ and $\Z:=[\z_1,\ldots,\z_T]$ as defined earlier in Sec.~\ref{subsec:FOBI} and illustrated in \textbf{Fig.~\ref{fig:normative_approach_panels}A},
\begin{align}\label{eq:SM_ICA_SL_offline_simple}
\min_{\Y\in \R^{d\times T}} \left\| \H^\top \left[\frac1T \Z \Z^\top \right]^{-1} \H - \Y^\top \bLambda^2 \Y \right\|_F^2~~ , ~~~~ \text{s.t.} ~~~~ \frac1T \Y\Y^\top = \I_d  ~~,
\end{align} 
with $\bLambda^2 = \diag(\lambda_1^2,\ldots,\lambda_d^2)$ any diagonal matrix with distinct finite positive entries.  

%\yb{To give a quick insight on how our objective function \eqref{eq:SM_ICA_SL_offline_simple} and its relationship to the FOBI procedure, detailed in Sec.\ref{subsec:FOBI}, we need the following informal results: 1. the projection obtained from the eigenvalue decomposition (EVD) of the matrix of similarity $\frac{1}{T}\X^\top\X$ and of the covariance matrix $\frac{1}{T}\X\X^\top$ are equivalent \cite{mardia1980multivariate,cox2000multidimensional}. 2. In our setting, solving the EVD of $\C_z^{-1}$ and the EVD $\C_z$ lead to the same solution. }

%\yb{ Now since the data $\H$ are white, $\frac{1}{T}\H\H^\top = \I_d$, performing the EVD of $\C_z$ is equivalent to performing the EVD of $\C_z^{-1/2} \frac{1}{T}\H\H^\top \C_z^{-1/2}$ as per result 2 above. 
%
%Finally using result 1, the EVD of $\H^\top \C_z^{-1/2} \C_z^{-1/2} \H = \H [\frac{1}{T}\Z\Z]^\top \H$, which corresponds to our objective function \eqref{eq:SM_ICA_SL_offline_simple} is equivalent to performing all three steps of the FOBI procedure in one objective function.}

% 
To accomplish all the three steps above in a single-layer network, we rewrite \eqref{eq:SM_ICA_SL_offline_simple} in terms of the input data $\X$ by substituting the expressions for $\H$ ({\bf Step 1}) and $\Z$ ({\bf Step 2a}): 
\begin{align}\label{eq:SM_ICA_SL_offline}
\min_{\substack{\Y\in \R^{d\times T} \\ \frac1T \Y\Y^\top = \I_d  }} \left\| \X^\top \C^{-1/2}_x \left[\frac1T \sum_{t=1}^{T}  \|\C_{x}^{-1/2} \x_t \|^2 \C_{x}^{-1/2} \x_t \x_t^\top \C_{x}^{-1/2} \right]^{-1} \C^{-1/2}_x \X - \Y^\top \bLambda^2 \Y \right\|_F^2.
\end{align}
Then the global minima of objective \eqref{eq:SM_ICA_SL_offline} recover the sources as formalized by the following theorem:
\begin{thm}\label{thm:SM_ICA_off}
Given that the sources are independent, centered, have unit variance, and distinct kurtosis (c.f. Sec~ \ref{sec:pb_statement}), then the global optimal solution for our objective \eqref{eq:SM_ICA_SL_offline}, denoted by $\Y^*$ satisfies
\begin{align}\label{thm_eq:sol_sm_ica}
\Y^* = \Thet \bPi \S
\end{align}
where $\Thet$ is a diagonal matrix with $\pm 1$'s on the diagonal, and $\bPi$ is a permutation matrix, and is thus a solution to the ICA problem.
\end{thm}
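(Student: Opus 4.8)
The plan is to exploit the orthogonal structure that whitening forces, cancel the unknown mixing entirely, and reduce objective \eqref{eq:SM_ICA_SL_offline_simple} to a classical best-rank-$d$ matrix-matching problem solved by the rearrangement inequality. First I would record the effect of whitening. Since the sources are centered, independent, and of unit variance, the data covariance factors as $\C_x = \A\left(\frac1T\S\S^\top\right)\A^\top = \A\A^\top$, so $\Q := \C_x^{-1/2}\A$ obeys $\Q\Q^\top = \I_d$ and the whitened data satisfy $\h_t = \C_x^{-1/2}\x_t = \Q\s_t$, i.e. $\H = \Q\S$ with $\Q$ orthogonal: after whitening the residual freedom is a rotation. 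Next I would identify the weighting matrix. Using $\|\h_t\| = \|\s_t\|$ and $\z_t = \|\h_t\|\h_t$ gives $\frac1T\Z\Z^\top = \Q\,\M\,\Q^\top$ with $\M := \frac1T\sum_t\|\s_t\|^2\s_t\s_t^\top$. The FOBI computation reproduced in Appendix A shows that independence, zero mean, and unit variance make $\M$ diagonal with $\M_{ii} = (d-1)+\kurt[s_i]$, and the distinct-kurtosis hypothesis makes these entries pairwise distinct and positive; hence $\left[\frac1T\Z\Z^\top\right]^{-1} = \Q\,\M^{-1}\Q^\top$ with $\M^{-1} = \diag(\nu_1,\dots,\nu_d)$, $\nu_i := 1/\M_{ii}$ distinct and positive.

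I would then substitute into the objective. Using $\H = \Q\S$ and $\Q^\top\Q = \I_d$, the first matrix collapses to $\H^\top\left[\frac1T\Z\Z^\top\right]^{-1}\H = \S^\top\Q^\top\Q\,\M^{-1}\Q^\top\Q\,\S = \S^\top\M^{-1}\S$. The unknown rotation $\Q$ cancels, which is the crucial simplification, and the problem becomes $\min \|\S^\top\M^{-1}\S - \Y^\top\bLambda^2\Y\|_F^2$ subject to $\frac1T\Y\Y^\top = \I_d$.

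Finally I would solve this matching problem spectrally. Because $\frac1T\S\S^\top = \I_d$, the normalized rows $\u_i := \frac{1}{\sqrt T}(\text{row }i\text{ of }\S)$ are orthonormal and are exactly the eigenvectors of $\S^\top\M^{-1}\S$ with eigenvalues $T\nu_i$ (and $0$ on the orthogonal complement); likewise $\Y^\top\bLambda^2\Y = T\sum_j\lambda_j^2\w_j\w_j^\top$ where $\w_j := \frac{1}{\sqrt T}(\text{row }j\text{ of }\Y)$ ranges, under the constraint, over all orthonormal $d$-frames in $\R^T$. Expanding the Frobenius norm leaves both spectra fixed, so minimizing the distance is equivalent to maximizing $\Tr\big((\S^\top\M^{-1}\S)(\Y^\top\bLambda^2\Y)\big) = T^2\sum_{i,j}\nu_i\lambda_j^2 P_{ij}$ with $P_{ij} := (\u_i^\top\w_j)^2$. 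The matrix $P$ is doubly substochastic; since all $\nu_i,\lambda_j^2>0$ the maximum is attained with $P$ doubly stochastic (forcing each $\w_j$ into $\mathrm{span}\{\u_i\}$), and a linear functional over the Birkhoff polytope is maximized at a permutation vertex, matched by sorting the two sequences via the rearrangement inequality. Equality in Cauchy--Schwarz then forces $\w_j = \pm\u_{\pi(j)}$, i.e. $\Y^* = \Thet\bPi\S$ with $\Thet$ a diagonal sign matrix and $\bPi$ a permutation.

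I expect this last step to be the main obstacle: one must argue that the maximum over orthonormal frames, whose $P$-matrices are only orthostochastic (a strict subset of doubly stochastic matrices), still coincides with the permutation maximum — which works precisely because permutation matrices are themselves orthostochastic, and distinctness of both $\{\nu_i\}$ and $\{\lambda_j^2\}$ makes the maximizing permutation the \emph{unique} optimizer over the larger polytope, ruling out degenerate rotated solutions. A secondary point of care is that the argument treats the empirical moments ($\frac1T\S\S^\top$, $\M$) as equal to their population values, which is exact only in the large-sample limit.
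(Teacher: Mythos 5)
Your proposal is correct, and although it follows the same high-level strategy as the paper (whitening reduces the residual mixing to an orthogonal matrix; the norm-weighted fourth-moment matrix of the sources is diagonal with entries determined by the kurtoses; the objective is then a weighted similarity-matching problem whose optimum aligns the rows of $\Y$ with the source directions), its execution is genuinely different in the two places where the real work happens. The paper's proof is modular and citation-driven: it reproduces the FOBI theorem of Cardoso and Zhang--Chan (sequential maximization of $\mathbb{E}(\u_i^\top \z)^2$ recovers the sources one at a time), converts the eigenproblem for $\C_z$ into one for $\C_z^{-1}$, identifies the Gramian $\H^\top \C_z^{-1}\H$, and then invokes the multidimensional-scaling/similarity-matching results of Mardia and Pehlevan et al.\ together with the $\bLambda^2$ symmetry-breaking argument of Minden et al.\ to conclude. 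You instead cancel the unknown mixing outright, showing $\H^\top\left[\tfrac1T \Z\Z^\top\right]^{-1}\H = \S^\top \D_s^{-1}\S$ where $\D_s := \tfrac1T\sum_t \|\s_t\|^2 \s_t\s_t^\top$ is diagonal with distinct entries $(d-1)+\kurt[s_i]$, and then prove the final alignment step from scratch: Frobenius expansion with both spectra fixed, the doubly substochastic matrix $P_{ij}=(\u_i^\top\w_j)^2$, Birkhoff--von Neumann plus the rearrangement inequality, and Cauchy--Schwarz equality to force $\w_j = \pm\u_{\pi(j)}$. Your route buys self-containedness and precision: it makes explicit exactly where distinct kurtosis (distinct $\nu_i$) and distinct $\lambda_j^2$ enter, and it settles the orthostochastic-versus-doubly-stochastic subtlety (uniqueness of the permutation vertex) that the paper leaves implicit when it says $\bLambda^2$ ``breaks the symmetry.'' The paper's route buys brevity and direct contact with the FOBI and similarity-matching literature it builds on. Two shared caveats: both arguments treat empirical moments as equal to their population values ($\tfrac1T\S\S^\top = \I_d$, $\D_s$ exactly diagonal) --- you flag this explicitly, while the paper's appendix switches silently between expectations and sample covariances; and your constant $(d-1)+\kurt[s_i]$ is the correct one for the paper's (non-excess) definition of kurtosis, whereas the appendix's ``$d+2$'' corresponds to excess kurtosis --- immaterial to either argument, since only distinctness of the entries is used.
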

\begin{proof}
We give the detailed proof of the theorem in Appendix~A. %\ref{SI:Proofs} 
\end{proof}

%%%%%%%%%%%%%%%%%%%%%%%%%%%%%%%%%%%%%%%%%%%%%%%%%%%%
%%%%%%%%%%%%%%%%%%%%%%%%%%%%%%%%%%%%%%%%%%%%%%%%%%%%
%%%%%%%%%%%%%%%%%%%%%%%%%%%%%%%%%%%%%%%%%%%%%%%%%%%%

\section{Derivation of the algorithm}

While our objective \eqref{eq:SM_ICA_SL_offline} can be minimized by taking gradient descent steps with respect to $\Y$, this would not lead to an online algorithm because such computation requires combining data from different time steps. Instead, following \cite{Pehlevan2017why}, we introduce auxiliary matrix variables corresponding to synaptic weights, which store sufficient statistics allowing for the ICA computation using solely instantaneous inputs. Such substitution leads to a min-max optimization problem that is solved by gradient descent/ascent. A corresponding online optimization algorithm using stochastic gradient descent/ascent maps onto an NN with local learning rules.

% The introduction of the auxiliary variables, $\W$, and $\M$, is essential to the mapping of our algorithm into an NN. We refer the reader to Section~\ref{sec:Bio_interpretation} for the neural implementation of our algorithm where we identify $\W$ and $\M$ with synaptic weights and the gradient-descent ascent steps with plasticity rules. 

\subsection{Min-max formulation}

Here we modify the objective \eqref{eq:SM_ICA_SL_offline} by introducing auxiliary variables, namely $\W$ and $\M$, leading to a min-max optimization problem. In the following sub-sections, the gradient descent/ascent optimization of the min-max objective will lead to an online algorithm that maps onto an NN where $\W$ and $\M$ correspond to synaptic weights.

We expand the square in Eq.~\eqref{eq:SM_ICA_SL_offline}, normalizing by $T^2$, and dropping terms that do not depend on $\Y$ yielding: 
\begin{align}\label{eq:off_SM_expanded}
\min_{\Y \in \R^{d\times T}} \frac{1}{T^2} \Tr\left( -2 \X^\top \bGamma_x \X  \Y^\top \bLambda^2 \Y + \Y^\top \bLambda^2 \Y \Y^\top \bLambda^2 \Y \right) ~~~\text{s.t.}~~~ \frac1T \Y\Y^\top = \I_d ~~,
\end{align}
where, for convenience, we introduce
\begin{align*}
\bGamma_x := \C^{-1/2}_x \left[\frac1T \sum_{t=1}^{T}  \|\C_{x}^{-1/2} \x_t \|^2 \C_{x}^{-1/2} \x_t \x_t^\top \C_{x}^{-1/2} \right]^{-1} \C^{-1/2}_x ~~.
\end{align*}
The quartic term in $\Y$ in \eqref{eq:off_SM_expanded} is a constant under the decorrelation constraint and can be dropped from the optimization. 

We now introduce auxiliary matrix variables $\W$ and $\M$, resulting in: 
\begin{align}\label{eq:min_max_off_SM}
    \min_{\Y \in \R^{d\times T}}\min_{\W \in \R^{d\times d}}\max_{\M \in \R^{d\times d}} \mathcal{L}(\W, \M, \Y) ~~,
\end{align}
\begin{align} 
\text{where} ~~~~~~ \mathcal{L}(\W, \M, \Y) := \frac1T\Tr\left( -2 \X^\top \W^\top \Y + \Y^\top \M\Y \right) + \Tr\left( \W \bGamma_x^{-1} \W^\top \bLambda^{-2}  - \M \right) ~~.~~~~~~~~~\nonumber
\end{align}
To verify the equivalence between the minimization problem \eqref{eq:off_SM_expanded} and the min-max problem \eqref{eq:min_max_off_SM} take partial derivatives of $\mathcal{L}(\W, \M, \Y)$ with respect to $\W$ (resp. $\M$) and note that the minimum (resp. maximum) is achieved when $\W = \frac{1}{T} \bLambda^{2} \Y \X^\top \bGamma_x  $ (resp. $\frac{1}{T} \Y\Y^\top = \I_d$). Substituting optimal $\W$ and  $\M$ leads back to \eqref{eq:off_SM_expanded}.

Finally, interchanging the order of minimization with respect to $\Y$, with the optimization with respect to $\W$ and $\M$, yields 
\begin{align}\label{eq:revert_min_max_off_SM}
\min_{\W \in \R^{d\times d}}\max_{\M \in \R^{d\times d}} \min_{\Y \in \R^{d\times T}}\mathcal{L}(\W,\M, \Y). 
\end{align}
The interchange is justified by saddle point property of $\mathcal{L}(\W, \M, \Y)$ with respect to $\Y$ and $\M$ \cite{Pehlevan2017why}.

\subsection{Gradient optimization in the offline setting}

In this subsection, we optimize the objective \eqref{eq:SM_ICA_SL_offline} in the offline setting, where the entire data matrix $\X$ is accessible. In this case, we solve the min-max problem \eqref{eq:min_max_off_SM} by alternating optimization steps. For fixed $\W$ and $\M$, we minimize the objective function $\mathcal{L}(\W,\M,\Y)$ over $\Y$, which yields the relation 
\begin{align}
\label{eq:Yoptimal}
    \Y &:=\argmin{\Y\in\R^{d\times T}}\mathcal{L}(\W,\M,\Y) ~ = ~ \M^{-1}\W \X ~~.
\end{align}

Before applying gradient optimization steps of the objective function $\mathcal{L}(\W, \M, \Y)$ with respect to $\W$ and $\M$, we first simplify $\bGamma_x^{-1}$ appearing in \eqref{eq:revert_min_max_off_SM}, as a part of the term, $\Tr\left( \W \bGamma_x^{-1} \W^\top \bLambda^{-2}\right)$,
\begin{align}\label{eqq}
\bGamma_x^{-1} & = \C^{1/2}_x \left[ \frac1T \sum_{t=1}^{T}  \|\C_{x}^{-1/2} \x_t \|^2 \C_{x}^{-1/2} \x_t \x_t^\top \C_{x}^{-1/2} \right] \C^{1/2}_x ~=~ \frac1T \sum_{t}^T  \alpha_t \x_t \x_t^\top ~~,
\end{align}
where $ \alpha_t = \| \C^{-1/2}_x \x_t \|^2$ is the squared norm of the whitened data. As the transformation from whitened data to the recovered sources is an orthogonal rotation,  the squared norm of the sources and of the outputs is preserved: 
\begin{align}\label{eq:alpha_term}
    \alpha_t = \| \C^{-1/2}_x \x_t \|^2  = \| \s_t\|^2 = \|\y_t\|^2 ~~. 
\end{align}
We then use \eqref{eq:alpha_term} to rewrite \eqref{eqq} as
\begin{align}\label{eqq1}
\bGamma_x^{-1}=\tfrac{1}{T} \X \ddiag(\Y^\top \Y) \X^\top = \tfrac{1}{T} \sum_{t=1}^T \|\y_t\|^2 \x_t \x_t^\top, 
\end{align}
where $\ddiag(\cdot)$ represents a diagonal matrix which keeps only the  diagonal elements of the argument matrix. 

We now obtain the update rules for $\W$ and $\M$  by gradient-descent ascent on \eqref{eq:revert_min_max_off_SM} and by replacing $\bGamma_x^{-1}$ according to \eqref{eqq1}
\begin{align}
     \W& {\small \gets\W+2\eta\left(\tfrac{1}{T} \Y\X^\top -\bLambda^{-2} \W \bGamma_x^{-1}\right)  = \W+\frac{2\eta}{T}\left(\Y\X^\top -\bLambda^{-2}\W \X  \ddiag(\Y^\top \Y) \X^\top \right)~~,} \label{eq:W_off_Simple}
    \\ 
    \M&\gets\M+\frac{\eta}{\tau}\left(\frac{1}{T} \Y\Y^\top- \I_d \right) ~~. \label{eq:M_off} 
\end{align} 
Here $\tau > 0$ is the ratio between the learning rates for $\W$ and $\M$, and $\eta \in (0,\tau)$ is the learning rate for $\W$, ensuring that $\M$ remains positive definite given a positive definite initialization.

\subsection{Online algorithm}

We now solve the min-max objective \eqref{eq:min_max_off_SM} in the online setting. At each time step, $t$, we minimize over the output, $\y_t$, by repeating the following gradient descent steps until convergence:
\begin{align}\label{eq:neural_dyn}
\y_t \gets \y_t + \gamma (\c_t - \M\y_t) ~~, 
\end{align} 
where $\gamma$ is a small step size, and we have defined the projection $\c_t := \W\x_t$, with biological interpretation described in Sec~\ref{sec:Bio_interpretation}. As in \eqref{eq:Yoptimal}, the dynamics converge to $\y_t = \M^{-1} \c_t$. We now take stochastic gradient descent-ascent steps in $\W$ and $\M$. We thus replace the averages in Eqs.~\eqref{eq:W_off_Simple}-\eqref{eq:M_off} with their online approximations 
\begin{align}
\frac{1}{T}\Y\X^\top \rightarrow \y_t \x_t^\top ~~;~~ \frac{1}{T}\Y\Y^\top \rightarrow \y_t\y_t^\top ~~;~~ 
\frac1T \bLambda^{-2} \W \X  \ddiag(\Y^\top \Y) \X^\top \rightarrow \|\y_t \|^2 \bLambda^{-2} \c_t \x_t^\top ~~ .\nonumber
\end{align}
This yields our online ICA algorithm (Algorithm~\ref{alg:online}) and the NN, see Section~\ref{sec:Bio_interpretation}.
%for ICA
\begin{algorithm}[ht]
  \caption{A similarity-preserving algorithm for Independent Component Analysis.}
  \label{alg:online}
\begin{algorithmic}
  \STATE {\bfseries input} data $\{\x_1,\dots,\x_T\}$; dimension $d$  
  \STATE {\bfseries output} $\{\y_1,\dots,\y_T\}$; dimension $d$  \hfill $\triangleright\;$ estimated sources
  \STATE {\bfseries initialize} the matrix $\W$, and positive definite matrix $\M$.
  \FOR{$t=1, 2,\dots,T $}
  \STATE $\c_t\gets\W \x_t ~~;$ \hfill $\triangleright\;$ projection of inputs
  \STATE {\bfseries run the following until convergence:}
  \STATE  \qquad $\frac{d\y_t(\gamma)}{d\gamma}=\c_t -\M\y_t(\gamma)~~;$ \hfill $\triangleright\;$ neural dynamics
  \STATE $\W \gets \W + 2\eta ( \y_t - \| \y_t \|^2 \bLambda^{-2} \c_t )\x_t^\top$ ~~;~~ $\M \gets \M + \frac{\eta}{\tau} (\y_t \y_t^\top - \I_d )~~;$ \hfill $\triangleright\;$ synaptic updates
  \ENDFOR
\end{algorithmic}
\end{algorithm}

%%%%%%%%%%%%%%%%%%%%%%%%%%%%%%%%%%%%%%%%%%%%%%%%%%%%
%%%%%%%%%%%%%%%%%%%%%%%%%%%%%%%%%%%%%%%%%%%%%%%%%%%%
%%%%%%%%%%%%%%%%%%%%%%%%%%%%%%%%%%%%%%%%%%%%%%%%%%%%

\section{Biological interpretation and  neural implementation}\label{sec:Bio_interpretation}

We now show that our online ICA algorithm (Algorithm \ref{alg:online}) maps onto an NN with local, activity-dependent synaptic update rules, which emulate aspects of synaptic plasticity observed experimentally. 

\subsection{Neural architecture and dynamics}

Our algorithm can be implemented by a biologically plausible NN presented in \textbf{Fig.~\ref{fig:normative_approach_panels}B}. The network consists of an input layer of $d$ neurons, representing the input data to be separated into independent components, and an output layer of $d$ neurons, with separate dendritic and somatic compartments, estimating the unknown sources. The network includes a set of feedforward synapses between the inputs and the dendrites of the output neurons as well as a set of lateral synapses between the output somas \textbf{Fig.~\ref{fig:normative_approach_panels}B}.

Although two-compartment neurons have not been common in machine learning, in neuroscience, they are often used to model pyramidal cells - the most numerous neuron type in the neocortex. Such neuron consists of an apical dendritic compartment, as well as a somatic compartment, which have distinct membrane potentials \cite{pinsky1994intrinsic,gasparini2006state,spruston2008pyramidal,katz2009synapse}. Recently, such two-compartment neurons appeared in bio-inspired machine learning algorithms \cite{guerguiev2017towards,sacramento2018dendritic,richards2019dendritic,lipshutz2020biologically,golkar2020simple,chavlis2021drawing}.

At each time step $t$, the network computes in two phases. First, the mixture $\x_t$, represented in the input neurons, is multiplied by the weight matrix $\W$ encoded by the feedforward synapses connecting the input neurons to the output neurons. This yields the projection $\c_t = \W\x_t$ computed in the dendritic compartments of the output neurons and then propagated to their somatic compartments. 

Second, the $d$-dimensional output signal $\y_t$ is computed as somatic activity in the output neurons and corresponds to the estimated sources. This is accomplished by the fast recurrent neural dynamics in lateral connection, Eq.~\eqref{eq:neural_dyn}, converging to the equilibrium value assignment of $\y_t$ in Algorithm ~\ref{alg:online}. The lateral synapses in \textbf{Fig.~\ref{fig:normative_approach_panels}B} implement only the off-diagonal elements of $\M$. Whereas diagonal elements of $\M$ would correspond to autapses (self-coupling of neurons), such Hebbian/ani-Hebbian networks can be designed without them \cite{Pehlevan2017why,minden2018biologically}.

\subsection{Synaptic plasticity rules}

To highlight the locality of our learning rules, we rewrite the element-wise synaptic updates for $\W$ and $\M$ in Algorithm~\ref{alg:online} using sub-/super-scripts: 
\begin{align}\label{eq:local_update_split}
    W_{ij} & \gets W_{ij} + 2\eta \left( y_t^i x_t^j -  \| \y_t \|^2 \frac{c_t^{i}}{\lambda_{i}^2} x_t^j \right) ~~;~~  M_{ij} \gets M_{ij} + \frac{\eta}{\tau} \left(  y_t^i y_t^j - \delta_{ij} \right) ,~ 1\leq i,j\leq d ~ .
\end{align}
In Eqs.~\eqref{eq:local_update_split}, $x^j_t$ is the activity of the $j^{th}$ input neuron, $y^i_t$ is the activity of the $i^{th}$ output neuron, and $c^j_t$ is the dendritic current of the $j^{th}$ output neuron, all at time $t$.  Furthermore, the influence of the dendritic current $c^t_j$ on a synapse's strength is modulated by the term $\| \y_t \|^2$, representing the overall activity of the output neurons. 

How could total output neuronal activity be signaled to each feedforward synapse in the network? There are several diffusible molecules in the brain which may affect synaptic plasticity and whose concentration may depend on the overall neural activity. These include extracellular calcium \cite{coenen2001cerebellar}, GABA\cite{hayama2013gaba,paille2013gabaergic}, dopamine \cite{reynolds2001cellular,zhang2009gain,yagishita2014critical}, noradrenaline \cite{salgado2012noradrenergic,johansen2014hebbian}, D-Serin \cite{henneberger2010long, isomura2016local,isomura2018error} or nitric oxide (NO), although its range of action is contested \cite{huang1997synaptic,holscher1997nitric,hardingham2013role}. Finally, local field potential can also affect synaptic plasticity \cite{weckstrom2010extracellular}. For the learning rule to function in the online setting
signaling must be fast, a requirement favoring local field potential and extracellular calcium out of the above candidates.

The learning rule \eqref{eq:local_update_split} for feedforward synaptic weights, $\W$, simplifies significantly near the optimum of the objective,  $\M \approx \I_d$,  and for the converged output activity, $\c_t = \W \x_t =\M \y_t$:
\begin{align}\label{eq:approx_hebb}
\Delta \W   =\eta_t \left(  \I - \|\y_t\|^2 \bLambda^{-2} \M \right)\y_t \x_t^\top  \approx \eta_t \left(  \I - \|\y_t\|^2  {\bf \Lambda}^{-2} \right)\y_t \x_t^\top ~~ .
\end{align}
Such an update is a nonlinearly modulated Hebbian learning rule where the sign of plasticity changes with the total output activity. For low total output activity, $ \|\y_t\|^2 < \lambda_{i}^{2}  $, the update is Hebbian, i.e., long-term potentiation (LTP) for correlated inputs and outputs. For high output activity, $ \|\y_t\|^2 > \lambda_{i}^{2}$, the update is anti-Hebbian, i.e., long-term depression (LTD) for correlated inputs and outputs. We compare and contrast this global activity-dependent modulation of plasticity with the Bienenstock, Cooper, and Munro (BCM) rule in subsection \ref{subsec:Contrast}.

Whereas three-factor learning has been invoked in multiple computational, especially reward-based, models \cite{sutton1988learning,mazzoni1991more,williams1992simple,kusmierz2017learning}, our model is the first to propose such learning in the fully normative approach for ICA.

\subsection{Comparison with existing rules}\label{subsec:Contrast}

To understand the distinctive features of our model versus existing approaches, we compare and contrast it with three existing models: 1. Oja's learning rule, 2. BCM learning rule \cite{bienenstock1982theory}, and 3. error-gated Hebbian rule (EGHR) \cite{isomura2016local}.

\textbf{1. Nonlinear Oja's learning rule.} 
\cite{oja1997nonlinear,hyvarinen1998independent} generalized the original Oja's rule, Eq.~\eqref{eq:Oja}, with a component-wise nonlinear function $g(\cdot)$ as $\Delta \W = g(\y_t)\x_t^\top -  g(\y_t) g(\y_t)^\top \W$.  
However, this model and follow-up work inherited the main drawbacks of the standard Oja's rule, i.e., they require pre-whitening of the data and rely on non-local learning rules. Indeed, the last term of the learning rules of nonlinear Oja implies that updating the weight of a synapse requires precise knowledge of output activities of all other neurons which are not available to the synapse (cf. \cite{pehlevan2019neuroscience} for details on standard PCA rules and networks of nonlinear neurons \cite{olshausen1996emergence}).

\textbf{2. BCM learning rule.}
Switching of the sign of plasticity depending on the total output activity Eq.~\eqref{eq:approx_hebb} is reminiscent of the BCM rule. It was initially postulated and later connected to an objective function \cite{intrator1992objective,castellani1999solutions} characterizing the deviation from Gaussian distribution but mainly focusing on skewness rather than kurtosis as in our model. For correlated input and output, the BCM rule induces LTD for ``sub-threshold" responses and LTP for ``super-threshold" responses,  with the threshold being a function of average output activity. Unfortunately, multiple output BCM neurons respond to the same dominant feature, producing an incomplete, highly redundant code \cite{intrator1992objective,castellani1999solutions}. In contrast, our network has lateral inhibitory connections whose weights are updated via anti-Hebbian rule Eq.~\eqref{eq:local_update_split} leading to the recovery of multiple sources.
Although experimental evidence has validated BCM-like plasticity in parts of the visual cortex and the hippocampus, other brain areas have yet to show similar behavior. Interestingly, an ``inverse'' BCM rule, similar to ours has been proposed in the cerebellum \cite{coesmans2004bidirectional,jorntell2006synaptic,vogt2010induction}. 

\textbf{3. Modulated Hebbian rules.} 
Recent neural implementations of ICA \cite{isomura2016local,kusmierz2017learning,isomura2018error} also introduced modulated Hebbian rules: $\Delta \W = (E_0 - E(\y_t))  g(\y_t) \x_t^\top$, with $E_0$ a constant, $E(\cdot)$ a nonlinear function of the total activity, and $g(\cdot)$ a component-wise nonlinear function.  This learning rule shares many similarities with ours. The term $E_0$ is a constant characterizing the source distributions, which could identify with our $\lambda_i$ terms, and the function $E(\cdot)$ resembles our $\|\y_t \|^2$ but is model dependent in their approach. This is where the similarities end as their objective function is inspired by the information-theoretic framework  \cite{bell1995information,hyvarinen2000independent} and ours - by the insight from the FOBI procedure \cite{cardoso1989source,parra2003blind} and spectral methods from the SM method \cite{Pehlevan2017why}. 

Their model can be considered partly normative since the neural architecture is predetermined and uses a hand-designed error-computing neuron to determine the global modulating factor rather than having been derived from an optimization problem. Interestingly, their model does not use lateral connections for output decorrelation resulting in a model without direct interaction between outputs. 
The presence of pairwise inhibitory interaction is crucial for our algorithm, leading to a globally optimal solution when the sources have distinct kurtosis. Numerical and theoretical analysis of the performance of the EGHR algorithm relies on the source distributions being the same and resulting in several equivalent optima. 

%%%%%%%%%%%%%%%%%%%%%%%%%%%%%%%%%%%%%%%%%%%%%%%%%%%%
%%%%%%%%%%%%%%%%%%%%%%%%%%%%%%%%%%%%%%%%%%%%%%%%%%%%
%%%%%%%%%%%%%%%%%%%%%%%%%%%%%%%%%%%%%%%%%%%%%%%%%%%%

\section{Numerical simulations}\label{sec:numerics}

In this section, we verify our theoretical results in numerical experiments. We use our model to perform ICA on both synthetic and real-world datasets. %\ref{SI:further_exp}
We designed three sets of experiments to illustrate the performance of our algorithm. In the first set, \textbf{Fig.~\ref{fig:synthetic_img}A}, we used as sources artificially generated signals, in the second - natural speech signals, \textbf{Fig.~\ref{fig:synthetic_img}B}, and in the third - natural scene images, \textbf{Fig.~\ref{fig:synthetic_img}C}. According to the generative model, Eq.~\eqref{eq:gen_model}, we then used random full rank square mixing matrices, $\A$, to generate the observed mixed signals, $\x_t$. From $\x_t$, we aimed to recover the original sources.  We show that our algorithm recovers sources regardless of sub- or super-Gaussianity of the kurtosis, which is essential for natural datasets. For details on the parameters used, see Appendix~C. 

\textbf{Synthetic data.}
We first evaluate our algorithm on a synthetic dataset generated by independent and identically distributed samples. 
The data are generated from periodic signals, i.e., square-periodic, sine-wave, saw-tooth, and Laplace random noise. 
The data were chosen with the purpose of including both super- and sub-Gaussian distribution known respectively as leptokurtic (``spiky'', e.g., the Laplace distribution) and platykurtic (``flat-topped'', the three other source signal). 
We show in \textbf{Fig.~\ref{fig:synthetic_img}A} the mixed signals  in black, on the left plots. We show on the right plot the recovered sources, in red, overlapped with the original sources, in blue, and the residual in green. We also show the histogram of each signal on the right side of each plot. Results are shown for 300 samples. 
We observe that the recovered and true sources nearly perfectly overlap, explaining the low value of the residual, which shows the almost perfect reconstruct performed by our algorithm.

In Appendix~\textbf{D}, we provide a numerical comparison of the performance of our algorithm with competing models, namely, Herault-Jutten  algorithm \cite{jutten1991blind}, EASI algorithm \cite{laheld1994adaptive,cardoso1996equivariant}, Bell and Sejnowski's algorithm \cite{bell1995information}, the Amari algorithm \cite{amari1995new}, and finally nonlinear Oja algorithm \cite{karhunen1995nonlinear,oja1997nonlinear}. These models were designed with NNs in mind and are seminal works on neural ICA algorithms. However, like nonlinear Oja algorithm, which is mentioned in Section 5.3.1, these models suffer from biological implausibility. 
In brief, our model either outperforms or is competitive with the models mentioned above. These results also confirm that our algorithm can deal with combinations of sub- and super-Gaussian sources, with or without pre-whitening of the data.

\textbf{Real-world data: Speech signals.}  
For the audio separation task, we used speech recordings from the freely available TSP data set \cite{kabal2002tsp}\footnote{Freely available at http://www.mmsp.ece.mcgill.ca/Documents/Data/ (Accessed May 24th 2021).} , recorded at 16kHz. 
The first source we use was obtained from a male speaker (MA02 04.wav), the second source from a female speaker (FA01 03.wav), and the third source synthetically generated from a uniform noise, as was previously used in the literature \cite{brakel2017learning}. 
We show our results in \textbf{Fig.~\ref{fig:synthetic_img}B}. We show the mixtures, the true sources, the recovered sources, and the residual. It is clear from the figure that our algorithm's outputs recover the true sources similarly to the synthetic dataset.

\textbf{Real-world data: Natural scene images.}  
We finally applied our algorithm to the task of recovering images from their mixtures, on data already used for BSS tasks \cite{hyvarinen2000independent,hyvarinen2000emergence,pehlevan2017blind} \footnote{Freely available at https://research.ics.aalto.fi/ica/data/images/ (Accessed May 24th 2021).}, as shown in \textbf{Fig.~\ref{fig:synthetic_img}C}. Here, we show separately the original sources, top images, the mixtures, middle images, and the recovered sources, bottom images of \textbf{Fig.~\ref{fig:synthetic_img}C}. We considered three grayscale images of size $256\times 512$ pixels (shifted and scaled to have zero-mean and unit variance), such that each image is treated as one source, with the pixel intensities representing the samples. We again observe in \textbf{Fig.~\ref{fig:synthetic_img}C} that the recovered sources are nearly identical to the original sources. We can also see that the histograms of the recovered sources nearly match the histograms of the original sources, up to their sign.
\begin{figure}[!ht]
    \includegraphics[width=1\textwidth]{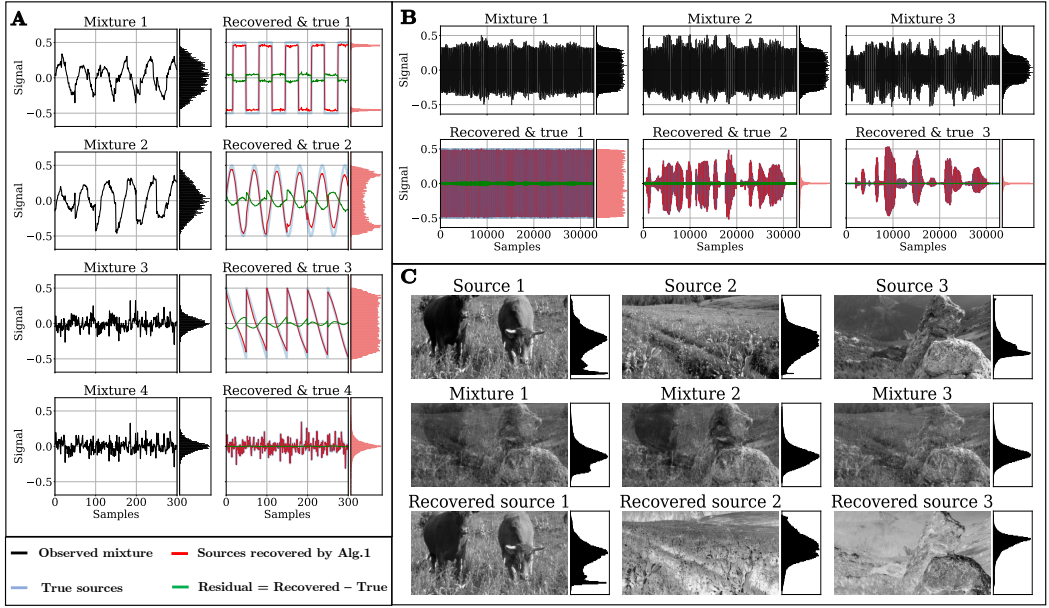}%/05252021_v4_mixed_vs_sources.pdf} 
    \caption{{\bf Our ICA algorithm recovers independent sources from synthetic and real-world mixtures.} \textbf{A.} Synthetic data, \textbf{B.} Natural speech data, \textbf{C.} Natural image data. In \textbf{A-B.} mixed signals are shown in black, the recovered signals - in red, the true sources - in blue, and their residual difference - in blue. We also show the associated distributions. \textbf{C.} The sources, mixture, and recovered sources, in top, middle and bottom rows respectively. 
    } \label{fig:synthetic_img}
\end{figure}

\section{Discussion}\label{sec:Discussion}

We proposed a new single-layer ICA NN with biologically plausible local learning rules. The normative nature of our approach makes the biologically realistic features of our NNs readily interpretable. In particular, our NN uses neurons with two separate compartments and is trained with extended Hebbian learning rules. The changes in synaptic strength are modulated by the total output neuronal activity, equivalent to performing gradient optimization of our objective function. We demonstrated that the proposed rule reliably converges to the correct solution over a wide range of mixing matrices, synthetic, and natural datasets.  The broad applicability and easy implementation of our NN and learning rules could further advance neuromorphic computation \cite{poikonen2016online,fouda2018independent,pehlevan2019spiking} and may reveal the principle underlying BSS computation in the brain. 

Recent work on canonical correlation analysis \cite{Carrol1968,hardoon2004canonical,Parra2018}, slow feature analysis \cite{wiskott2002slow}, and ICA-like algorithms have led to biologically plausible NNs \cite{zylberberg2011sparse,king2013inhibitory,deneve2017brain,brendel2020learning}, some  of which rely on two-compartment neurons \cite{windolf2020SFA,lipshutz2020biologically}. These NNs could, in principle, be used for popular BSS tasks known as second-order blind identification \cite{stone2001blind,liu2005learning,blaschke2006relation,clopath2007ica} or in the context of kernel ICA \cite{bach2002kernel,li2009joint}. This suggests the existence of a single model of two-compartment neurons and non-trivial local learning rules for BS. In future work, we aim at proposing such a model, including high-order statistics, temporal correlation, and diversity of views.

One limitation of our approach is the inability of the model to separate sources with the same kurtosis. Yet, as long as sources possess some distinct even-order moments, our scaling rule can be altered to separate the sources \cite{cardoso1989source}. Another limitation is the well-known sensitivity of kurtosis to outliers. This limitation could be overcome if scaling varies as a sublinear function of the total activity \cite{zhang2006ica}. These changes do not affect the neural architecture nor the locality of the learning rules. 

Clarifying the limitations of our model leads us to ask various follow-up questions left for future work. How can we further generalize the solution beyond the choice of nonlinearity and beyond the task of linear ICA? We could envision considering more than two covariance matrices as in the JADE algorithm \cite{cardoso1993blind,oja2006scatter,ollila2008complex}, which effectively performs joint-diagonalization of arbitrarily many matrices. A neural solution was proposed in \cite{ziegaus2004neural} but again relies on non-local Oja-based rules. Ongoing work on nonlinear ICA \cite{hyvarinen2019nonlinear,khemakhem2020variational} is of great interest to us since it might be a perfect candidate for multi-layered architectures. 

Recently, several works proposed biologically plausible supervised learning algorithms \cite{sacramento2018dendritic,richards2019dendritic,golkar2020simple,qin2021contrastive,payeur2021burst}. Combining these with ICA and unsupervised learning algorithms in general would provide a more comprehensive description of cognitive processes.

%\yb{Nonetheless, ICA and unsupervised learning algorithms in general represent only a fraction of the task that the brain might be solving. This work does not considering other cognitive processes, top-down, such as attention mechanism or supervised learning. Significant efforts are currently undertaken to build biologically plausible supervised learning algorithms \cite{sacramento2018dendritic,richards2019dendritic,golkar2020simple,qin2021contrastive,payeur2021burst}, which are more top-down approaches.} 

%\newpage. 

\section*{Acknowledgments and Disclosure of Funding }

Y.B. is grateful to Romain Cosentino, and Claudia Skok Gibbs for insightful discussions related to this work and feedback on this manuscript. We also thank the members of the Neural Circuits and Algorithms Group at the Flatiron Institute for providing feedback on an early version of this work. 
\\
\\
The authors did not receive any third party funding for the completion of this project.

%\bibliography{main_v3.bib}

%\bibliographystyle{plain}
\bibliographystyle{unsrt}

\newpage

\section*{Supplementary Materials}

\appendix
\renewcommand{\theequation}{S.\arabic{equation}}
\setcounter{equation}{0}

This is the supplementary material for the NeurIPS titled ``A Normative and Biologically Plausible Algorithm for Independent Component Analysis'', from Yanis Bahroun, Dmitri B. Chklovskii, and Anirvan M. Sengupta.

\section{Proof of our main theorem}

We start by restating the setup in which our algorithm operates. 
The type of ICA considered in our work assumes the following generative model. There are $d$ sources recorded $T$ times forming the columns of $\S := [\s_1,\ldots ,\s_T] \in \R^{d\times T}$ whose components $s^1_t,\ldots,s^d_t$ are assumed non-Gaussian and independent. Without loss of generality, we assume that each source has zero-mean, unit variance, and finite and distinct kurtosis, a common assumption among kurtosis-based ICA methods \cite{miettinen2015fourth}. The kurtosis of a random variable $v$ is defined as {\small $\kurt [v]= \mathbb{E} \left[(v-\mathbb{E}(v))^{4}\right] / \left(\mathbb{E} \left[(v-\mathbb{E}(v) )^{2}\right]\right)^{2}$}. Finally, sources are assumed to be mixed through a linear system, i.e., there exists a full rank mixing matrix, $\A \in \R^{d\times d}$, producing the $d$-dimensional mixture, $\x_t$, expressed as 
\begin{align}\label{eq:gen_model_SI}
\x_t  = \A \s_t \qquad \forall t \in \{1,\ldots,T\}~~.
\end{align}
The goal of ICA algorithms is then to determine a signal, $\y_t$, obtained from a fixed linear transformation of the observed signal, $\x_t$, i.e., $\exists \W_{ICA} \in \R^{d \times d}$, such that
\begin{align}\label{eq:std_obj_ica_SI}
    \y_t = \Thet \bPi \s_t ~~~~ \text{and} ~~~~  \y_t := \W_{ICA} \x_t , ~~~~  \forall t \in \{1,\ldots,T\}~~,
\end{align}
where $\Thet$ is a diagonal matrix with $\pm 1$'s on the diagonal, and $\bPi$ a permutation matrix.  As a result, $\y_t$ represents the ideally recovered unknown sources.

We now recall our objective function for ICA 
\begin{align}\label{eq:SM_ICA_SL_offline_SI}
\min_{\substack{\Y\in \R^{d\times T} \\ \frac1T \Y\Y^\top = \I_d  }} \left\| \X^\top \C^{-1/2}_x \left[\frac1T \sum_{t=1}^{T}  \|\C_{x}^{-1/2} \x_t \|^2 \C_{x}^{-1/2} \x_t \x_t^\top \C_{x}^{-1/2} \right]^{-1} \C^{-1/2}_x \X - \Y^\top \bLambda^2 \Y \right\|_F^2.
\end{align}
and finally our main theorem which is proved in the following. 
\begin{thm}\label{thm:SM_ICA_off_SI}
Given that the sources are independent, centered, have unit variance, and distinct kurtosis (c.f. Sec~2 of the main text), then the global optimal solution for our objective \eqref{eq:SM_ICA_SL_offline_SI}, denoted by $\Y^*$ satisfies
\begin{align}\label{thm_eq:sol_sm_ica_SI}
\Y^* = \Thet \bPi \S
\end{align}
where $\Thet$ and $\bPi$, defined in \eqref{eq:std_obj_ica_SI}, represents the sign and permutation ambiguity of the solution, and is thus a solution to the ICA problem.
\end{thm}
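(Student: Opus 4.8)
The plan is to start from the equivalent form \eqref{eq:SM_ICA_SL_offline_simple} of the objective, noting that the inner bracket is exactly the $d\times d$ covariance of the norm-weighted whitened data, and to reduce the problem to a trace maximization whose optimum is pinned down by an eigenvector-matching argument; the recovery of the sources then follows from the FOBI result established in Appendix~A. Concretely, I would set $\K := \tfrac1T\Z\Z^\top$ and $\G := \H^\top\K^{-1}\H$, expand the Frobenius norm, and use the constraint $\tfrac1T\Y\Y^\top=\I_d$ to discard the quartic term $\Tr(\Y^\top\bLambda^2\Y\Y^\top\bLambda^2\Y)=T^2\Tr(\bLambda^4)$, which is constant on the feasible set. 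This turns the minimization into $\max_{\frac1T\Y\Y^\top=\I_d}\Tr\!\left(\bLambda^2\,\Y\G\Y^\top\right)$.

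Next I would diagonalize $\G$. Writing the eigendecomposition $\K=\sum_i\mu_i\mathbf{q}_i\mathbf{q}_i^\top$ and using the whitening identity $\tfrac1T\H\H^\top=\I_d$, the vectors $\v_i:=\tfrac1{\sqrt{T}}\H^\top\mathbf{q}_i\in\R^T$ are orthonormal and satisfy $\G=\sum_i\tfrac{T}{\mu_i}\v_i\v_i^\top$; thus $\G$ has exactly $d$ nonzero eigenvalues $T/\mu_i$ with eigenvectors $\v_i$. Parametrizing $\Y=\sqrt{T}\,\P$ with orthonormal rows $\mathbf{p}_1,\dots,\mathbf{p}_d$ and setting $c_{ai}:=\v_i^\top\mathbf{p}_a$, the objective becomes $T\sum_{a,i}\tfrac{\lambda_a^2}{\mu_i}\,c_{ai}^2$.

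The crux — and the step I expect to be the main obstacle — is the combinatorial optimization over $c_{ai}$. Because $\{\mathbf{p}_a\}$ and $\{\v_i\}$ are orthonormal, the matrix $P_{ai}:=c_{ai}^2$ is doubly substochastic (row and column sums at most $1$), and maximizing the linear objective $\sum_{a,i}\tfrac{\lambda_a^2}{\mu_i}P_{ai}$ with strictly positive weights over that polytope forces the optimum to a vertex, i.e.\ a full $d\times d$ permutation $\rho$. The rearrangement inequality then selects a \emph{unique} $\rho$ pairing the largest $\lambda_a^2$ with the smallest $\mu_i$, using that the $\lambda_a^2$ are distinct (by the hypothesis on $\bLambda$) and the $\mu_i$ are distinct (the kurtosis-based eigenvalues of $\K$, distinct because the sources have distinct kurtosis). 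Realizing this bound requires $c_{a,\rho(a)}=\pm1$ and the rest zero, that is $\mathbf{p}_a=\pm\v_{\rho(a)}$. Care is needed to justify that the substochastic relaxation is tight (the bound is attained by these aligned $\mathbf{p}_a$) and that no feasible $\Y$ whose rows leave the span of $\{\v_i\}$ can do better, which follows since $\G$ vanishes on that orthogonal complement.

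Finally I would translate $\mathbf{p}_a=\pm\v_{\rho(a)}$ back to $\Y^*=\Thet\,\bPi\,\Q^\top\H$, where $\Q=[\mathbf{q}_1,\dots,\mathbf{q}_d]$ collects the eigenvectors of $\K$. Since $\Q^\top\H$ is precisely the FOBI estimate — the whitened data projected onto the eigenvectors of the norm-weighted covariance — the source-recovery result proved in Appendix~A gives $\Q^\top\H=\Thet'\bPi'\S$, whence $\Y^*=\Thet\bPi\S$ after absorbing the signs and permutations, establishing \eqref{thm_eq:sol_sm_ica_SI}.
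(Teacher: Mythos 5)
Your proof is correct, and while it shares the paper's overall skeleton---reduce the objective to a trace maximization involving the Gram matrix $\H^\top \C_z^{-1}\H$, identify the optimum with the FOBI projection, and invoke the source-recovery result of Theorem~\ref{thm:FOBI_zhang}---it differs genuinely at the crucial middle step. The paper gets from the Gram-matrix objective to the FOBI eigenvectors by citation: it appeals to the multidimensional-scaling duality of Mardia et al.~\cite{mardia1980multivariate} (as popularized in the similarity-matching framework \cite{Pehlevan2017why}) for the principal-subspace part, and to \cite{minden2018biologically} for the claim that the diagonal weighting $\bLambda^2$ breaks the rotational degeneracy and pins down individual eigenvectors. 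You instead prove this from scratch: diagonalizing $\G=\sum_i (T/\mu_i)\,\v_i\v_i^\top$ via the pushed-forward eigenvectors $\v_i = \tfrac{1}{\sqrt{T}}\H^\top \mathbf{q}_i$, observing that the alignment coefficients $c_{ai}^2$ form a doubly substochastic matrix, and using the vertex structure of that polytope together with the strict rearrangement inequality to show that the unique optimum (up to signs) is $\mathbf{p}_a=\pm\v_{\rho(a)}$, with $\rho$ pairing the largest $\lambda_a^2$ with the smallest $\mu_i$. Your route buys a self-contained uniqueness proof that makes explicit exactly where both distinctness hypotheses enter---distinct $\lambda_a^2$ to exclude rotations within the recovered subspace, distinct $\mu_i$ (equivalently, distinct kurtosis) to make the optimal permutation strictly better than all other vertices---and it also absorbs the paper's separate ``$\C_z$ versus $\C_z^{-1}$'' equivalence step automatically, since the inversion is handled by the $T/\mu_i$ eigenvalues in the pairing. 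The paper's route is shorter and situates the objective within an established framework, but leaves the symmetry-breaking unproved in-house. Two cosmetic slips in yours, neither affecting correctness: after substituting $\Y=\sqrt{T}\P$ the objective is $T^2\sum_{a,i}(\lambda_a^2/\mu_i)\,c_{ai}^2$ rather than $T\sum_{a,i}(\lambda_a^2/\mu_i)\,c_{ai}^2$ (immaterial for the argmax); and, exactly like the paper, you pass silently between sample covariances and the population statement of Theorem~\ref{thm:FOBI_zhang} (including that $\C_x^{-1/2}\A$ is orthogonal only when $\tfrac1T\S\S^\top=\I_d$), so on this point your argument is no less rigorous than the one it replaces.
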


We propose an outline of the proof. 
\begin{itemize}
    \item Proof of the FOBI procedure
    \item Proof that $\C_z$ and $\C_z^{-1}$ lead to the same solution of the eigenvalue problem solving FOBI.
    \item Note that $\C_z^{-1}$ and $\C_z^{-1/2} \C_h \C_z^{-1/2}$ are equivalent by whitening properties.
    \item Identify the Gramian associated with  $\C_z^{-1/2} \C_h \C_z^{-1/2}$, which is $\H^\top \C_z^{-1} \H$.
    \item Use the Similarity matching formulation of principal component analysis on the aforementioned Gramian to finalize the proof.
\end{itemize}

\subsection{Statements and Proofs of FIBO}\label{SI:Proofs}

We redefine important notations, i.e., the whitened data denoted by $\h_t$ is defined as
\begin{align*}
\h_t = \C_x^{-1/2} \x_t ~~.
\end{align*}
Now the weighted whitened data, scaled by its norm, denoted by $\z_t$, 
is defined as 
\begin{align*}
\z_t = \| \h_t\| \cdot \h_t ~~.
\end{align*}
We have defined the sample covariance matrix of $\x_t$ and $\z_t$ as $\C_x$ and $\C_z$ respectively.

Before, demonstrating our theorem, we need to demonstrate the main results of the FOBI procedure, i.e., we must show that when the whitened data, $\h_t$ are projected onto the eigenvectors of the weighted data, $\z_t$ they recover the sources.
This results is stated and proved using the following theorem from \cite{cardoso1989source,zhang2006ica}. 
\begin{thm}\label{thm:FOBI_zhang}
Let $\s$, $\h$, and $\z$ be random vectors such that $\h = \B\s$, where $\B$ is an orthogonal matrix, and $\z = \|\h\| \cdot \h$. Suppose additionally $\s$ has zero-mean independent components and these components have distinct finite kurtosis. 
Then the orthogonal matrix $\U$ which gives the principal components of $\z$ performs ICA on $\h$.
\end{thm}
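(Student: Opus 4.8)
The plan is to diagonalize the covariance matrix $\C_z := \mathbb{E}[\z\z^\top]$ explicitly and simply read off its eigenvectors. First I would exploit the defining relations $\z = \norm{\h}\cdot\h$ and $\h = \B\s$ together with the orthogonality of $\B$, which gives $\norm{\h}^2 = \h^\top\h = \s^\top\B^\top\B\s = \norm{\s}^2$. Hence $\z\z^\top = \norm{\s}^2\,\B\s\s^\top\B^\top$, and taking expectations yields $\C_z = \B\,\mathbb{E}[\norm{\s}^2\s\s^\top]\,\B^\top$. The crux is therefore to evaluate the inner matrix $\D := \mathbb{E}[\norm{\s}^2\s\s^\top]$.

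Next I would compute $\D$ entrywise using independence together with the zero-mean and unit-variance assumptions (unit variance follows because whitening of $\h$ forces $\mathbb{E}[\s\s^\top] = \I$). Writing $\norm{\s}^2 = \sum_k s_k^2$, the $(i,j)$ entry is $\sum_k \mathbb{E}[s_k^2 s_i s_j]$. For $i\neq j$ every summand factorizes through an odd moment $\mathbb{E}[s_i]=0$ or $\mathbb{E}[s_i^3]\mathbb{E}[s_j]=0$, so all off-diagonal entries vanish. For $i=j$ the diagonal entry is $\mathbb{E}[s_i^4] + \sum_{k\neq i}\mathbb{E}[s_k^2] = \kurt[s_i] + (d-1)$, using $\mathbb{E}[s_i^4]=\kurt[s_i]$ under unit variance. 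Thus $\D = \diag(\kurt[s_1]+d-1,\ldots,\kurt[s_d]+d-1)$ is diagonal.

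It then follows that $\C_z = \B\D\B^\top$ is an eigendecomposition, with $\B$ orthogonal and $\D$ diagonal. The decisive step is the distinct-kurtosis hypothesis, which makes the eigenvalues $\kurt[s_i]+d-1$ pairwise distinct; this forces every eigenspace to be one-dimensional, so the eigenvectors of $\C_z$, namely the columns of $\B$, are determined uniquely up to ordering and sign. Consequently any orthogonal matrix $\U$ of principal directions of $\z$ must satisfy $\U = \Thet\bPi\B^\top$ for some diagonal sign matrix $\Thet$ and permutation $\bPi$. Finally $\U\h = \Thet\bPi\B^\top\B\s = \Thet\bPi\s$ recovers the sources up to the unavoidable sign and permutation ambiguities, which is precisely the ICA solution.

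I expect the main obstacle to be conceptual rather than computational: the moment calculation is routine, but the argument hinges on translating the distinct-kurtosis assumption into distinct eigenvalues and then invoking uniqueness of eigenvectors for a matrix with simple spectrum. If two sources shared a kurtosis value, the corresponding eigenvalue would be degenerate and its eigenvectors would only be pinned down up to an arbitrary rotation within that eigenspace, breaking source recovery; making this dependence precise is the heart of the proof.
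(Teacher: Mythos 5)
Your proof is correct, and it takes a recognizably different route from the paper's. The paper (following Cardoso and Zhang--Chan) uses the \emph{variational} characterization of principal components: it computes $\mathbb{E}(\u_i^\top\z)^2 = \sum_k c_{ik}^2\,\kurt(s_k) + \mathrm{const}$ for an arbitrary unit vector $\u_i$ (with $\c_i^\top = \u_i^\top\B$), and then argues greedily---maximize over $\u_1$, then over $\u_2$ orthogonal to $\u_1$, and so on---so that distinct kurtoses force each $\c_i$ to be a signed coordinate vector, recovering $y_i = \u_i^\top\h = \pm s_i$ one at a time. You instead diagonalize the second-moment matrix outright, $\C_z = \B\,\mathbb{E}[\norm{\s}^2\s\s^\top]\,\B^\top = \B\D\B^\top$ with $\D = \diag(\kurt[s_1]+d-1,\ldots,\kurt[s_d]+d-1)$, and invoke uniqueness of the eigenbasis of a symmetric matrix with simple spectrum to conclude $\U = \Thet\bPi\B^\top$. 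The fourth-moment computation underlying both arguments is identical (your off-diagonal cancellation is exactly the paper's vanishing cross terms), but the key lemma differs: spectral uniqueness for you, sequential variance maximization with deflation for the paper. Your version is more compact, isolates the identifiability mechanism cleanly (distinct kurtosis if and only if simple spectrum), and makes the failure mode under coincident kurtoses---a rotation-invariant eigenspace---fall out for free; the paper's variational form has the advantage of mirroring Step 2b of FOBI as actually implemented (trace maximization, amenable to Oja-type online updates). Two incidental notes: your appeal to whitening of $\h$ to justify unit variance is legitimate but strictly speaking imports an assumption from the surrounding FOBI context (the paper instead takes unit variance from its global setup in Sec.~2); and your additive constant $d-1$ is the one consistent with the paper's stated (non-excess) definition of kurtosis, whereas the paper's own $d+2$ tacitly uses excess kurtosis---an immaterial discrepancy since only the weights $c_{ik}^2$ on the kurtoses matter.
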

\begin{proof}
Let $\s = [s_1, \ldots, s_d]^\top$ , $\U = [\u_1, \ldots, \u_d]$, and $\C = [\c_1, \ldots, \c_d] = [c_{ij}]_{d\times d} = \U\B$. Since $\B$ is orthogonal, we have $\|\h\| = \|\B\s\| = \|\s \|$. 
The second-order moment of the projection of $\z$ on $\u_i$ is 
\begin{align}\label{eq:ICA_proof1}
\mathbb{E}(\u_i^\top\z)^2 & = \mathbb{E}(\u_i^\top\|\h\| \cdot \h )^2 = \mathbb{E}(\|\s\|^2 \cdot (\u_i^\top \B\s)^2 ) = \mathbb{E}(\sum_{k=1}^d (\c_i^\top \s)^2 \cdot s_k^2) \nonumber \\
 &= \sum_{k=1}^d c_{ik}^2 \mathbb{E}(s_k^4) + \sum_{k=1}^d \sum_{p=1, p\neq k}^d c_{ip}^2 \mathbb{E}( s_p^2 s_k^2)  + \sum_{k=1}^d \sum_{p=1,p\neq k}^d \sum_{q=1,q\neq p}^d
c_{ip} c_{iq} \mathbb{E}(s_p s_q s_k^2)
\end{align}
When $q \neq p$, at least one of $q$ and $p$ is different from $k$. Suppose $q \neq k$. We then have $\mathbb{E}(s_p s_q s_k^2) = \mathbb{E}(s_q)\mathbb{E}(s_p s_k^2) = 0$ since $s_i$ are independent and zero-mean. We also have $\sum_{k=1}^d c_{ik}^2 = 1$ since $\C$ is orthogonal. Equation \eqref{eq:ICA_proof1} then becomes
\begin{align}\label{eq:ICA_proof2}
\mathbb{E}(\u_i^\top\z)^2 &= \sum_{k=1}^d c_{ik}^2 \mathbb{E}(s_k^4) + \sum_{k=1}^d \sum_{p=1, p\neq k}^d c_{ip}^2 \mathbb{E}( s_p^2 s_k^2) \nonumber \\
&= \sum_{k=1}^d c_{ik}^2 \mathbb{E}(s_k^4) + \sum_{k=1}^d \sum_{p=1, p\neq k}^d c_{ip}^2 \nonumber \\
& = \sum_{k=1}^d c_{ik}^2 \kurt(s_k) + d+2
\end{align}
Therefore $\mathbb{E}(\u_i^\top \z)^2$ is the weighted average of $kurt(s_i)$ plus a constant. As $s_i$ are assumed to have different kurtosis, without loss of generality, we assume $ kurt(s_1) > kurt(s2) > \cdots > kurt(s_n)$. From Equation \eqref{eq:ICA_proof2} we can see that maximization of $\mathbb{E}(\u_1^\top \z)^2$ gives $\c_1 = [\pm 1, 0, \ldots, 0]^\top$ , which means that $y_1 = \u_1^\top \h = \u_1^\top \B\s = \c_1^\top \s = \pm s_1$. 
After finding $\u_1$, under the constraint that $\u_2$ is orthogonal to $\u_1$, the maximum of $\mathbb{E}(\u_2^\top \z)^2$ is obtained at $\c_2 = [0,\pm 1, 0, \ldots, 0]^\top$. Consequently $y_2 = \u_2^\top \h = \c_2^\top \s = \pm s_2$. 
Repeating this procedure, finally all independent components can be estimated as $y_i = \u_i^\top \h = \pm s_i$ where $\u_i$ maximizes $\mathbb{E}(\u_i^\top \z)^2 $.
In other words, the orthogonal matrix $\U$ performing PCA on $\z$ (without centering of $\z$) actually performs ICA on $\h$.
\end{proof}

We have thus proved that the information about the eigenvectors of $\z_t$ are enough for finding the sources after whitening. 

\subsection{Eigenvalue Formulation}

Now that we have clarified that the FOBI procedure consists of projecting the input data using Theorem~\ref{thm:FOBI_zhang}, we can rephrase it as a generalized eigenvalue problem where $\W$ is the solution of the following problem 
\begin{align}\label{eq:GEV_ICA}
     \W  \W^\top = \I ~~ \text{and}~~ \W \C_z \W=   \D ~~,
\end{align}
with $\D$ a diagonal matrix with distinct nonnegative diagonal elements. This problem is equivalent to 
\begin{align}\label{eq:proof_eigen_rev}
     \tilde{\W} \tilde{\W}^\top = \I ~~,~~ \tilde{\W} \C_z^{-1} \tilde{\W}^\top= \D^{-1} ~~,
\end{align}
With $\H$ and $\h_t$, $\Z$ and $\z_t$ define as in the main text. This results from the fact that $\C_z$ is assumed full rank and have distinct eigenvalue, the problem $\W \C_z \W = \D $ has the same solution as $\W \C_z^{-1} \W = \D^{-1} $ under orthogonality constraint.

Now we note that by definition $\C_h = \frac1T \H \H^\top = \I_d$, and that $\C_z^{-1}$ is positive semi-definite thus admits a square-root. We can then rephrase the problem \eqref{eq:proof_eigen_rev} as 
\begin{align}\label{eq:full_rev_fobi}
\argmax{\hat{\W}^\top \hat{\W}=\I_d} \Tr ( \hat{\W}^\top \C_z^{-1/2} \C_h \C_z^{-1/2} \hat{\W}) = 
\argmax{\hat{\W}^\top \hat{\W}=\I_d} \Tr ( \hat{\W}^\top \C_z^{-1/2} \frac1T \H \H^\top \C_z^{-1/2} \hat{\W}) 
\end{align}

\subsection{From the covariance matrix to the Gram matrix}

In the final step we need to consider the following matrix $\K = \H \C_z^{-1/2}$, which is a simple but essential rewriting of the eigenvalue problem \eqref{eq:full_rev_fobi} as
\begin{align*}
\argmax{\hat{\W} \hat{\W}^\top=\I_d} \Tr ( \hat{\W} \K \K^\top \hat{\W}^\top) 
\end{align*}
Now we really on the result from Mardia et al.\cite{mardia1980multivariate} for multidimensional scaling. Their results connects the eigenvalue and eigenvectors of the Gram matrix, i.e., matrix of similarity with that of the covariance matrix. This result was recently popularized by the similarity matching framework \cite{pehlevan2015normative,Pehlevan2017why} that states that the principal subspace projection can equivalently be obtained by 
\begin{align*}
    \argmin{\tilde{\Y}} \| \K^\top \K - \tilde{\Y}^\top \tilde{\Y} =  \|^2_F =
     \argmin{\tilde{\Y}} \| \H^\top \C_z^{-1} \H - \tilde{\Y}^\top \tilde{\Y} \|_F^2
\end{align*}
However this solution does the projection onto the principal subspace but does not recover the perfect sources as it is invariant by rotation which is a problem for ICA. Thus we include $\bLambda^2$ to break the symmetry as was done in \cite{minden2018biologically}. 
The resulting objective function then leads to the same solution as the FOBI procedure. 
\begin{align*}
    \argmin{\substack{\Y\in \R^{d\times T} \\ \frac1T \Y\Y^\top = \I_d  }} \| \H^\top \C_z^{-1} \H - \Y^\top \bLambda^2 \Y \|_F^2
\end{align*}
which is exactly our objective function \eqref{eq:SM_ICA_SL_offline_SI}
\begin{align}
\argmin{\substack{\Y\in \R^{d\times T} \\ \frac1T \Y\Y^\top = \I_d  }} \left\| \X^\top \C^{-1/2}_x \left[\frac1T \sum_{t=1}^{T}  \|\C_{x}^{-1/2} \x_t \|^2 \C_{x}^{-1/2} \x_t \x_t^\top \C_{x}^{-1/2} \right]^{-1} \C^{-1/2}_x \X - \Y^\top \bLambda^2 \Y \right\|_F^2. \nonumber
\end{align}
thus concluding the proof of Theorem~\ref{thm:SM_ICA_off_SI}.

\section{Details of the illustrative example}\label{SI:illust_example}

We present the parameters used for our illustrative example for FOBI. 
We use a sinusoid waveform and a sawtooth signal as the independent sources. Each data source was then shuffled to remove possible temporal correlation, leading to fake dependence. We showed in Fig.\textbf{1B} 5000 datapoints. 
The mixing matrix $\A$ was randomly chosen, and in the example is
\begin{align*}
    \A = \begin{bmatrix} 0.10054428  &  0.81736508 \\ 0.75216771 & 0.44640104 \end{bmatrix} 
\end{align*}
It is then the observed data that are shown as $\x =\A\s$.
The rest is described in the main text. 
The code used to produce the figure can be found attached to the submission, {\verb!NeurIPS_Fig_1A_procedure.ipynb!}.

%. The whitened version of the observations, $\h = \V\x$, is shown in Fig. 1 (c). Fig. 1 (d) shows the waveforms of the transformed data $\z = \|\v\|\cdot \v$ together with the scatterplot. From this figure we can see that the axes corresponding to the independent sources $s_i$ are almost the same as those giving principal components of $\z$. The orthogonal matrix for PCA of $\z$ can be obtained by applying EVD on the covariance matrix of $\z$:
%
% According to Theorem \textbf{???}, the independent components of $\x$ can then be obtained as $\y = \U\v$ (or equivalently, $\y = \U\V\x$), as shown in Fig. 1 (e). Clearly the independent sources $s_i$ have been successfully recovered. 

%\section{Metrics and supplementary experiments}\label{SI:further_exp}

\section{Numerics}

\subsection{Detailed experimental figures}

We propose zoomed in version of Fig.2B and 2C of the main showing respectively the speech separation and image separation tasks. We show in Fig.~\ref{fig:audio_sep_task} the results obtained by our offline algorithm on the speech separation tasks as reported in the main text.
We show in Fig.~\ref{fig:image_sep_task} the results obtained by our offline algorithm on the image separation tasks as reported in the main text.
The codes used to produce the respective figures can be found attached to the submission, {\verb!NeurIPS_Fig_2B_audio!}, {\verb!NeurIPS_Fig_2C_image!}.

\begin{figure}[!ht]
\centering
    \includegraphics[width=0.5\textwidth]{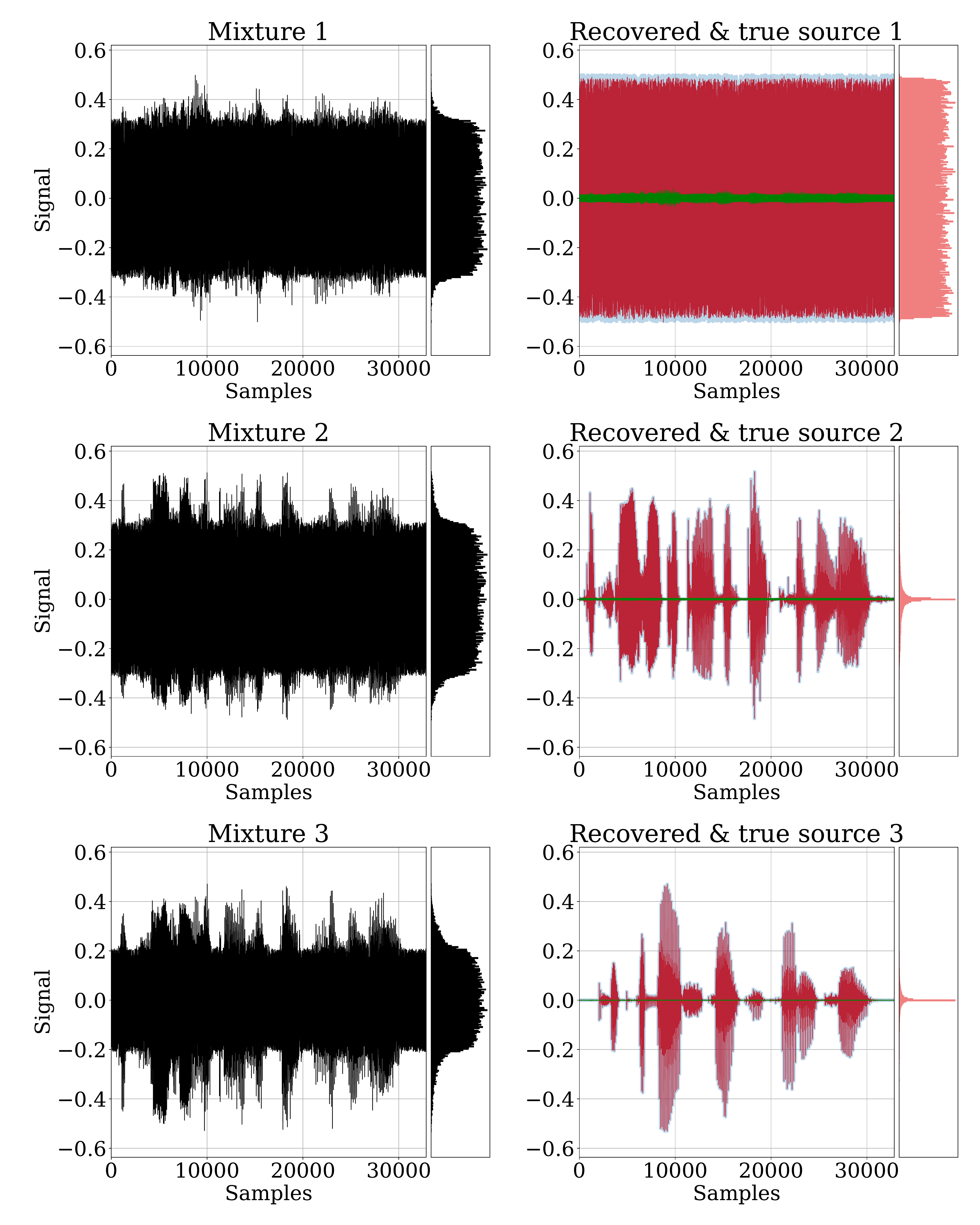} 
    \caption{Illustration of the performance of our algorithm on the speech separation task. Our algorithm recovers the sources from mixed signals. It shows a zoomed in version of the results obtained for the speech separation task Fig.2.B of the main text. We show in black the mixed signals and in red (resp. blue) the recovered (resp. true) sources, and in green their difference called Residual. We also show the histogram of the associated distributions.
    } \label{fig:audio_sep_task}
\end{figure}
\begin{figure}[!ht]
    \includegraphics[width=1\textwidth]{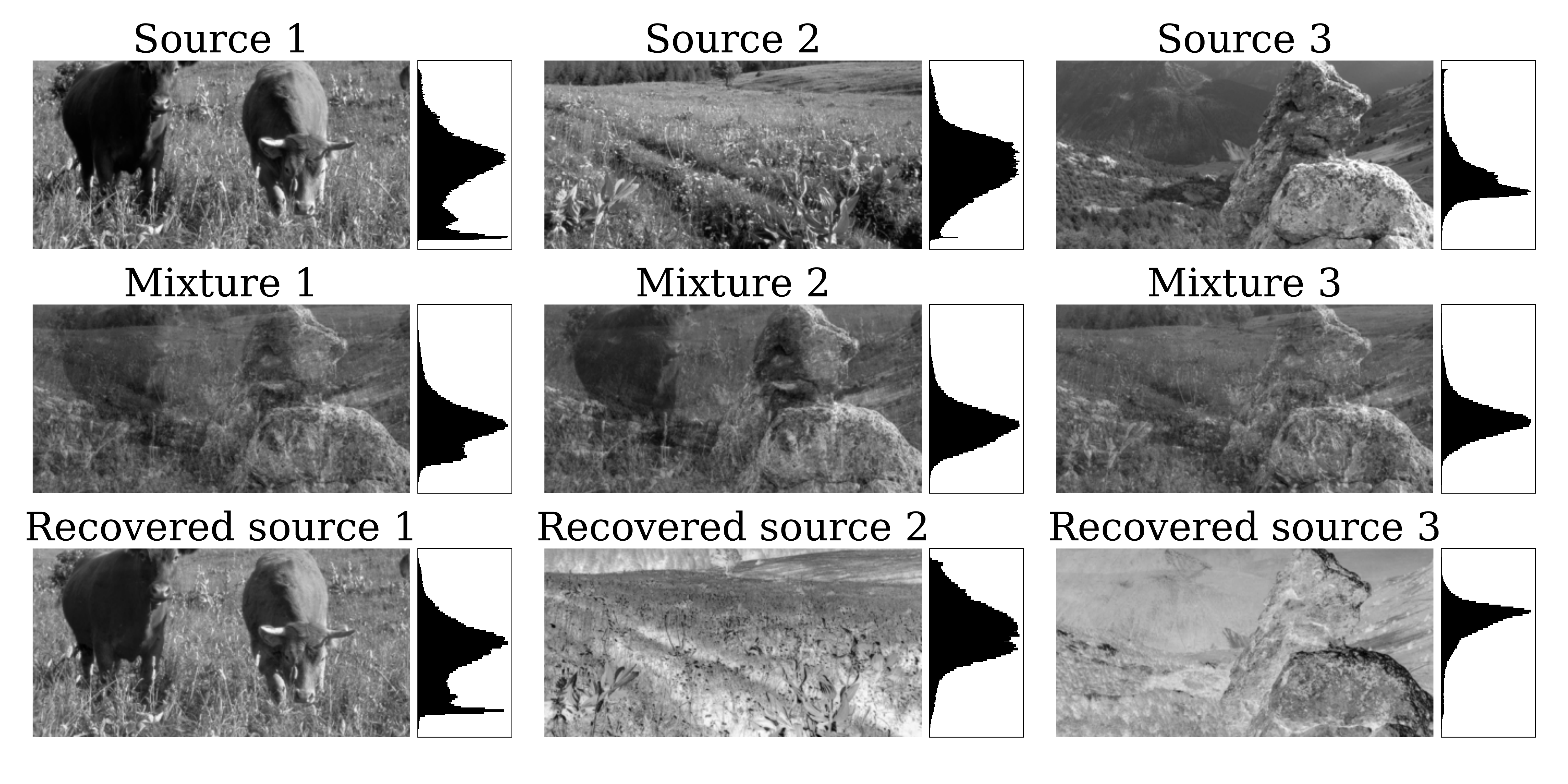} 
    \caption{Illustration of the performance of our algorithm on the image separation task. Our algorithm recovers the sources from mixed images. It shows a zoomed in version of the results obtained for the image separation task Fig.2.C of the main text. We show in each row respectively the original sources, the mixed images, and the recovered alongside their histograms.
    } \label{fig:image_sep_task}
\end{figure}
\newpage

\subsection{Online algorithm results}

We show in Fig.~\ref{fig:audio_sep_task_online} the results obtained by our online algorithm as defined in the main text. We also show in Fig.~\ref{fig:image_sep_task_online} the results obtained by our online algorithm on the image separation tasks as reported in the main text.

The results are almost identical to what we obtained in the offline case which confirms the relevance of our algorithm. Interestingly, in the speech separation task the noise is not entirely removed from one of the two voices which explains a ``constant'' value of the residual in green. However, for the image separation tasks the results are nearly identical.

The learning parameters are presented in the following subsection, and the codes used to produce the respective figures can be found attached to the submission, {\verb!NeurIPS_Fig_2B_audio!}, {\verb!NeurIPS_Fig_2C_image!}.

\begin{figure}[!ht]
\centering
    \includegraphics[width=0.45\textwidth]{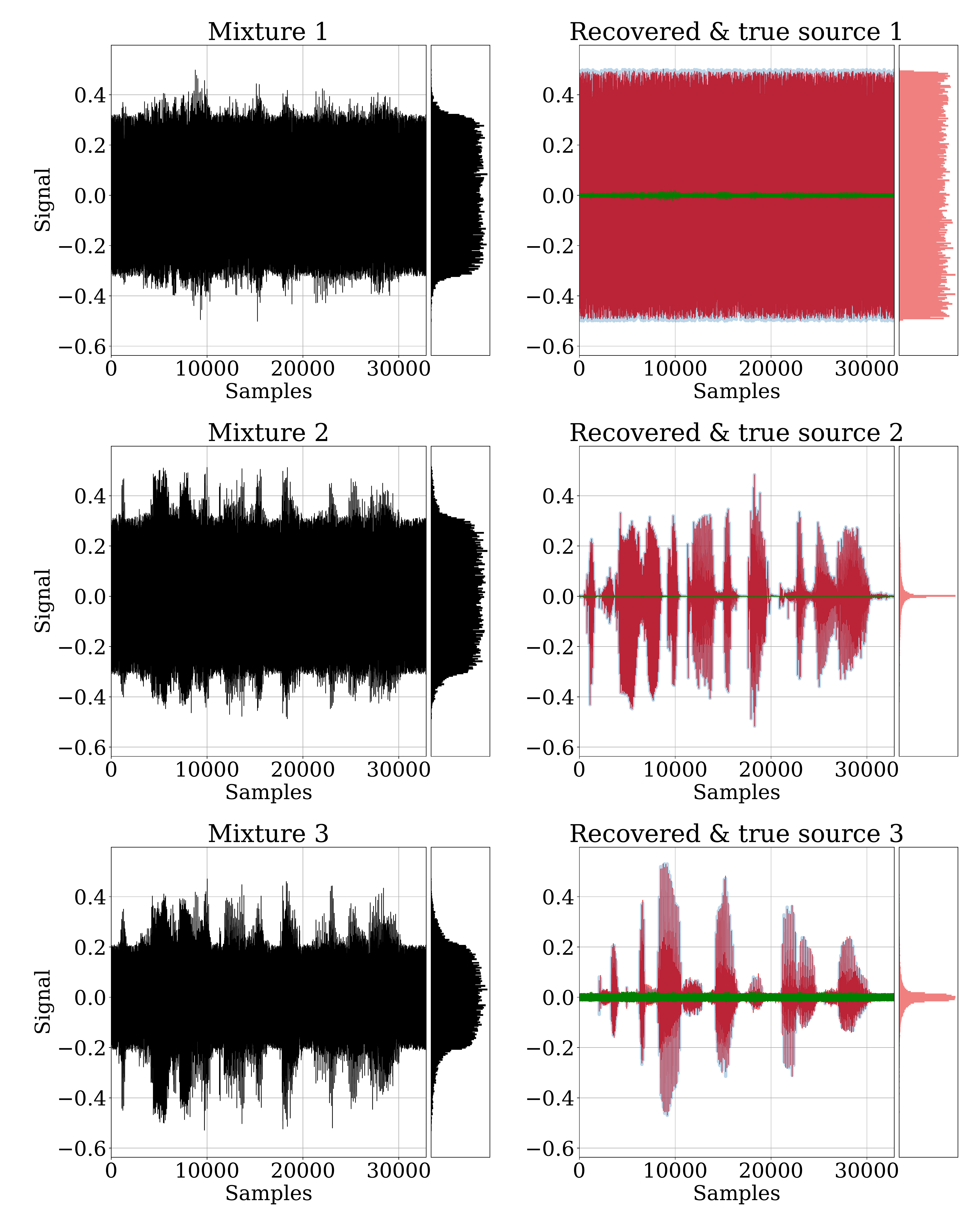} 
    \caption{Performance of our online algorithm on the speech separation task. Our algorithm again recovers the sources from mixed signals. As was done for the offline version of our algorithm we show in black the mixed signals and in red (resp. blue) the recovered (resp. true) sources, and in green their difference called Residual. We also show the histogram of the associated distributions.
    } \label{fig:audio_sep_task_online}
\end{figure}
\begin{figure}[!ht]
\centering
    \includegraphics[width=0.8\textwidth]{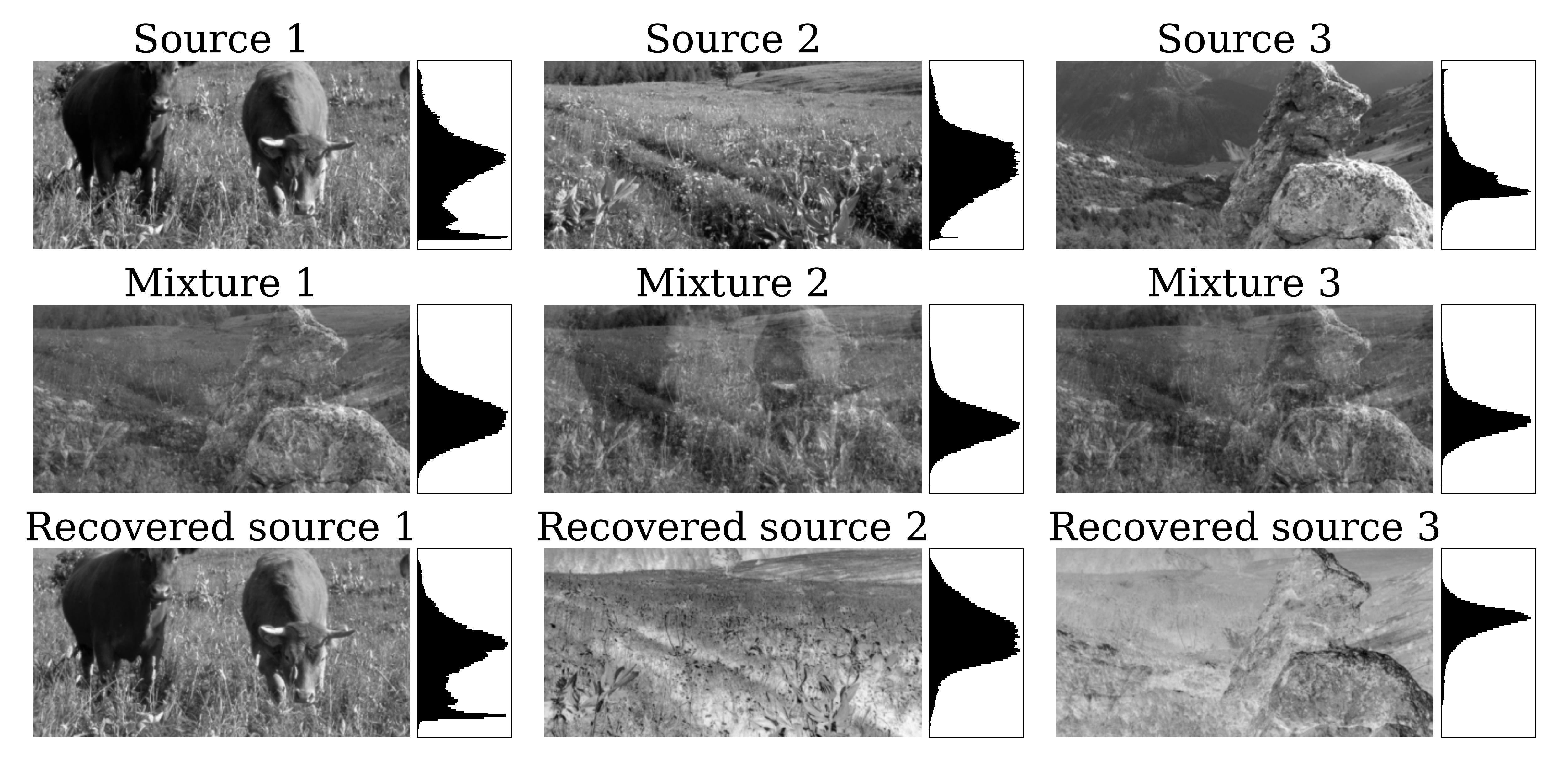} 
    \caption{Performance of our online algorithm on the image separation task. Our algorithm again recovers the sources from mixed images. As was done for the offline algorithm we show in each row respectively the original sources, the mixed images, and the recovered alongside their histograms.
    } \label{fig:image_sep_task_online}
\end{figure}

\newpage
\subsection{Experimental details}
\label{apdx:details}

We implemented both the offline and online version of our similarity-preserving algorithm, Algorithm~1 of the main text. 
We initialized $\W$ to be a random matrix with i.i.d.\ mean-zero normal entries with variance $1$. 
We initialized $\M$ to be the identity matrix $\I_d$. 
We used fixed learning rates for $\eta$ and $\tau$. The use of time-varying learning did not appear to change the results significantly and might instead lead to over-parameter tuning, which is arguably biologically implausible. 
We nonetheless performed grid-search to find the optimal hyperparameters, we performed a grid search over $\eta_0\in\{10^{-2},10^{-3},10^{-4},10^{-5}\}$, each multiplied by 1, 2 or 5. For we used $\tau\in\{2,1.5,1,0.9,0.85,0.75,0.5,0.1\}$.
The best performing parameters are reported in Table \ref{tab:hyper}. 
The results obtained with these parameters are presented in the main text, for Offline SM-ICA Fig.2A,2B and 2C. And the online SM-ICA Fig.3ABC below. 

\begin{table}[!ht]
\begin{tabular}{|c|c|c|c|c|}
\hline
\texttt{Algorithm} & \texttt{parameters}  & ~~~~~~~~~ $\texttt{synthetic}$  ~~~~~~~~~& ~~~~~~~~~ $\texttt{audio}$  ~~~~~~~~~ & ~~~~~~~~~ $\texttt{image}$~~~~~~~~~ \\ \hline
\multirow{2}{*}{\textbf{\begin{tabular}[c]{@{}c@{}} ~~~ Offline ~~~ \\ ~~~ SM-ICA ~~~ \end{tabular}}} & $\eta$~~,~~$\tau$ & $5\cdot 10^{-3} , 0.75$ & $5\cdot 10^{-4}, 0.85$ & $5\cdot 10^{-3}, 0.75$ \\ \cline{2-5} 
& $\bLambda$    & $[1.0, 1.5, 1.8, 6.07]$ & $[1.8,  6.15, 19.7]$   & $[2.48, 3.06, 6.64]$   \\ \hline
\multirow{2}{*}{\textbf{\begin{tabular}[c]{@{}c@{}}Online\\ SM-ICA\end{tabular}}}   & $\eta$~~,~~$\tau$ &      $2\cdot 10^{-5},  1.5$  & $2\cdot 10^{-5} , 1.5$ & $2\cdot 10^{-5} , 1.5$ \\ \cline{2-5} 
& $\bLambda$    &  $[1.0, 1.5, 1.8, 6.07]$  & $[1.8,  6.15, 19.7]$   & $[2.48, 3.06, 6.64]$   \\ \hline
\end{tabular}
    \caption{Training parameters for comparing our model.}
    \label{tab:hyper}
\end{table}

\section{Comparison to other models on the synthetic dataset}

In this section, we compare numerically the performance of our algorithm against competing models on four different scenarios. 

\subsection{Experimental details}

\paragraph{Metrics.}  

First of all we define the metric used to quantitatively compare the ICA algorithms. 
As defined in Sec.~2 of the main text, the task of ICA algorithms is to extract the source signals up to a permutation and sign-flipping  (Eq.~(2)). Therefore, all results are measured by considering all possible pairings of predicted signals and source signals, and measuring the mean-squared error $\varepsilon_{\text{MSE}}$ defined as 
\begin{align}
 \varepsilon_{\text{MSE}}(t) =\min_{\Thet,\P} \frac{1}{td} \sum_{t'=1}^t \|\s_{t'} - \Thet \P \y_{t'}\|^2
\end{align}

\paragraph{Dataset.}

We evaluate our algorithm on a synthetic dataset generated by independent and identically distributed samples. 
The data are generated from meaningful signals, i.e., square-periodic, sine-wave, saw-tooth, and Laplace random noise. 
The data were chosen with the purpose of including both super- and sub-Gaussian distribution known respectively as leptokurtic (``spiky'', e.g., the Laplace distribution) and platykurtic (``flat-topped'', the three other source signal).

\paragraph{Scenarios.}

We designed four scenarios: scenario 1, data are white and composed of 3 independent sub-Gaussian sources;  scenario 2, data are colored and again composed of only 3 sub-Gaussian sources; scenario 3, data are white and composed of 2 sub-Gaussian sources and 1 super-Gaussian source; scenario 4, Data are colored and composed of 2 sub-Gaussian sources and 1 super-Gaussian source.

\paragraph{Competing algorithms.}

To quantitatively measure the performance we present in Appendix~D a numerical comparison of our algorithm with competing algorithms, namely, Herault Jutten  algorithm \cite{jutten1991blind}, EASI algorithm \cite{laheld1994adaptive,cardoso1996equivariant}, Bell and Sejnowski's algorithm \cite{bell1995information}, the Amari algorithm \cite{amari1995new}, and finally nonlinear Oja algorithm \cite{karhunen1995nonlinear,oja1997nonlinear}, on the synthetic data set, on three different scenarios. 
These models were designed with NNs in mind and are seminal works on neural ICA algorithms.
However, like nonlinear Oja algorithm, which we mentioned in Section 5.3.1, these models suffer from biological implausibility.

\begin{table}[!ht]
\begin{tabular}{|c|c|c|c|c|c|}
\hline
\texttt{Algo} & \texttt{param.}  &  $\texttt{Scenario 1}$  & $\texttt{Scenario 2}$   &  $\texttt{Scenario 3}$ & \texttt{Scenario 4} \\ \hline
\multirow{2}{*}{\textbf{\begin{tabular}[c]{@{}c@{}} ~~~ Online ~~~ \\ ~ SM-ICA (ours) ~ \end{tabular}}} & $\eta$~~,~~$\tau$ & $5\cdot 10^{-3} , 0.75$ & $5\cdot 10^{-4}, 0.85$ & $5\cdot 10^{-3}, 0.75$ & $5\cdot 10^{-4}, 0.85$ \\ \cline{2-6} 
& $\bLambda^{-1}$    & $[1.0, 1.5, 1.8]$ & $[1.0, 1.5, 1.8]$   & $[1.0, 1.5, 6.07]$ & $[1.0, 1.5, 6.07]$ \\ \hline
\textbf{HJ} \cite{jutten1991blind} & $\eta$  & $10^{-4} $  & $10^{-4}$  & $10^{-4}$ & $10^{-4}$ \\ \hline
\textbf{EASI} \cite{cardoso1996equivariant} & $\eta$  & $10^{-4} $ & $10^{-4} $ & $10^{-4}$ & $10^{-4} $ \\ \hline
\textbf{Nonlin. Oja} \cite{oja1997nonlinear} & $\eta$  &  $10^{-3}$   &  $10^{-3}$  & $10^{-3}$  & $10^{-3}$ \\ \hline
\textbf{Infomax}\cite{bell1995information} & $\eta$  & $5 \cdot 10^{-4}$   &  $5 \cdot 10^{-4}$  &  $5 \cdot 10^{-4}$ & $5 \cdot 10^{-4}$ \\ \hline
\textbf{Amari} \cite{amari1995new} & $\eta$  & $5\cdot 10^{-4}$  &  $5 \cdot 10^{-4}$   &  $5 \cdot 10^{-4}$  & $5 \cdot 10^{-4}$ \\ \hline
\end{tabular}
    \caption{Training parameters of the different models on the four different scenarios.}
    \label{tab:comp_models}
\end{table}

\subsection{Results}

We show in Fig.~\ref{fig:sub_gaussian}\textbf{a} the comparison between our model and competing algorithm on Scenario 1. Scenario 1 consists of a mixture of sub-Gaussian distribution, which has been pre-whitened. 
Naturally, in scenario 1, all models perform the task perfectly, with Nonlinear Oja and Amari algorithm being the fastest, our model also performs competitively. 

We show in Fig.~\ref{fig:sub_gaussian}\textbf{b} the comparison between our model and competing algorithm on Scenario 2. Scenario 2 consists of a mixture of sub-Gaussian distribution, which has not been pre-whitened. 
In scenario 2, our algorithm is more competitive as it does not require pre-whitening, unlike the algorithms that fail at the task.

\begin{figure}[!ht]
    \centering
    \subfloat[\centering Scenario 1]{{\includegraphics[width=0.45\textwidth] {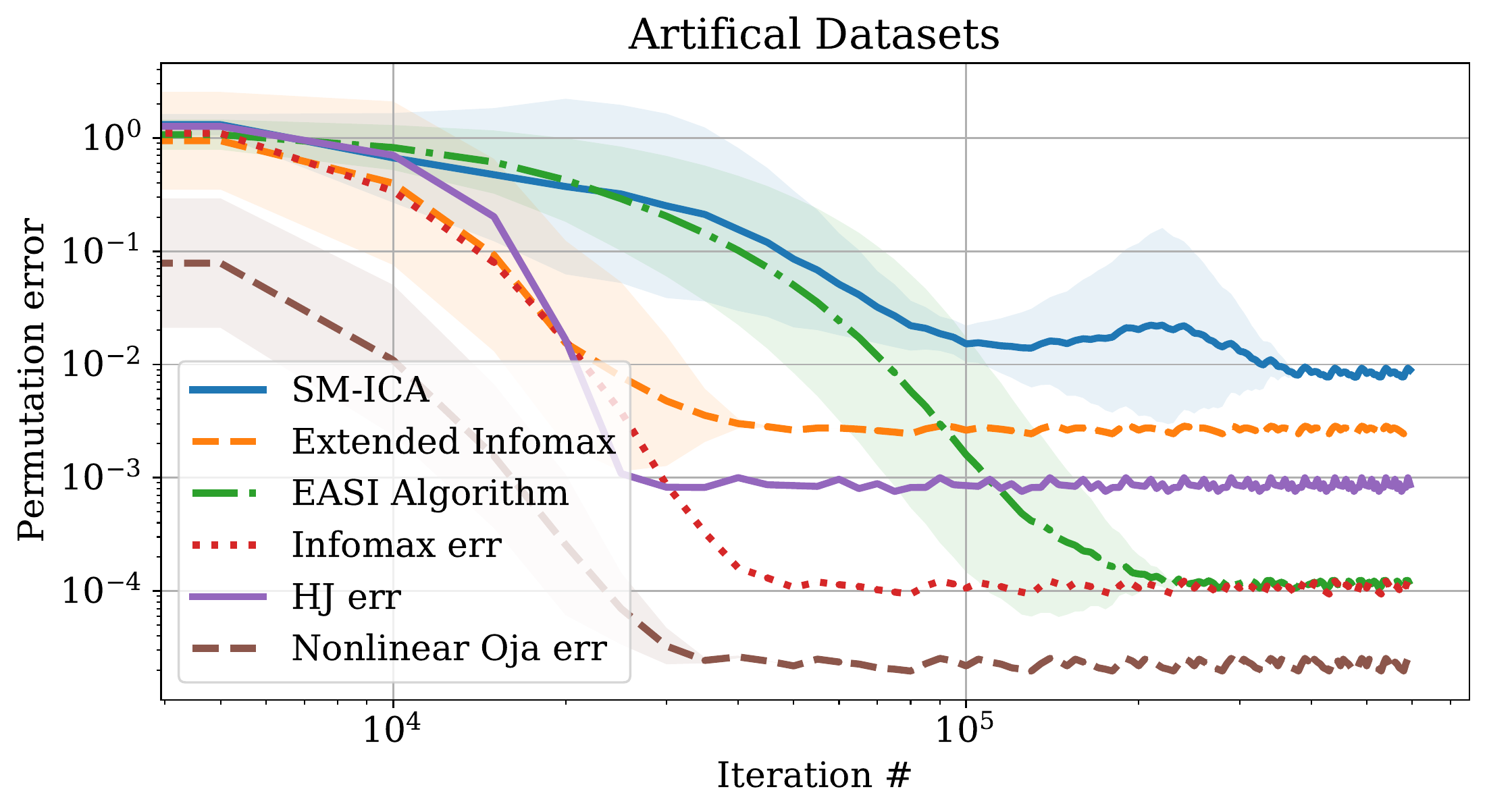} }}%
    \qquad
    \subfloat[\centering Scenario 2]{{\includegraphics[width=0.45\textwidth] {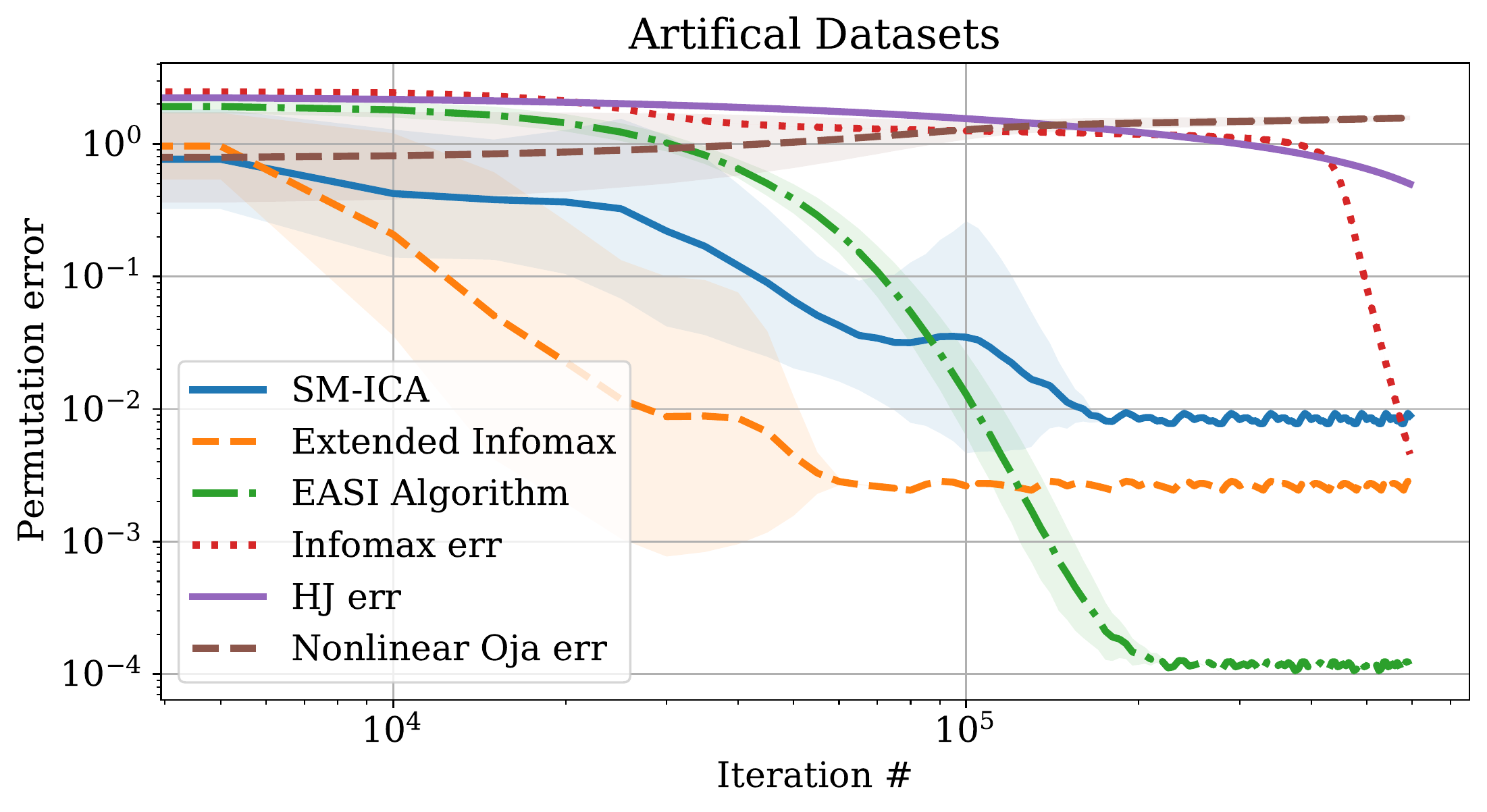} }}%
    \caption{Comparison of the models on mixture of sub-Gaussian distribution, with (a) whitened data, and (b) colored data.}%
    \label{fig:sub_gaussian}%
\end{figure}

We show in Fig.~\ref{fig:super_sub_gaussian}\textbf{a} the comparison between our model and competing algorithm on Scenario 3. Scenario 3 consists of a mixture of both sub-Gaussian and super-Gaussian distribution, which has been pre-whitened. 
Again, our algorithm is more competitive as it has the ability to separate sources from a different sign of kurtosis, as explained in the main body of the paper.

We show in Fig.~\ref{fig:super_sub_gaussian}\textbf{b} the comparison between our model and competing algorithm on Scenario 4. Scenario 4 consists of a mixture of both sub-Gaussian and super-Gaussian distributions, which have not been pre-whitened. 
In this scenario, our algorithm performs one of the best as it does not require pre-whitening and can separate sources with a different sign of kurtosis.

\begin{figure}[!ht]
    \centering
    \subfloat[\centering Scenario 3]{{\includegraphics[width=0.45\textwidth]{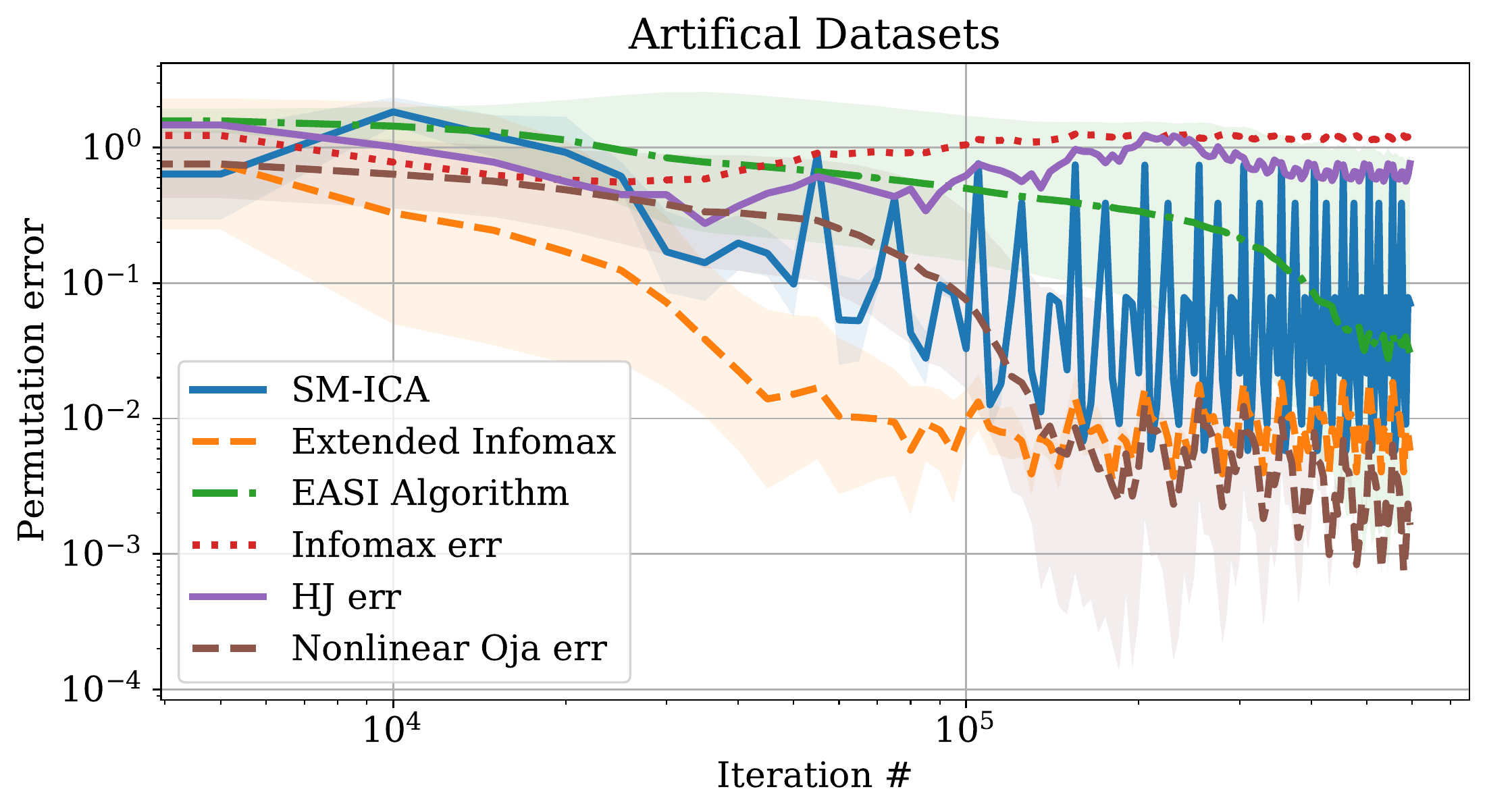} }}%
    \qquad
    \subfloat[\centering Scenario 4]{{\includegraphics[width=0.45\textwidth]{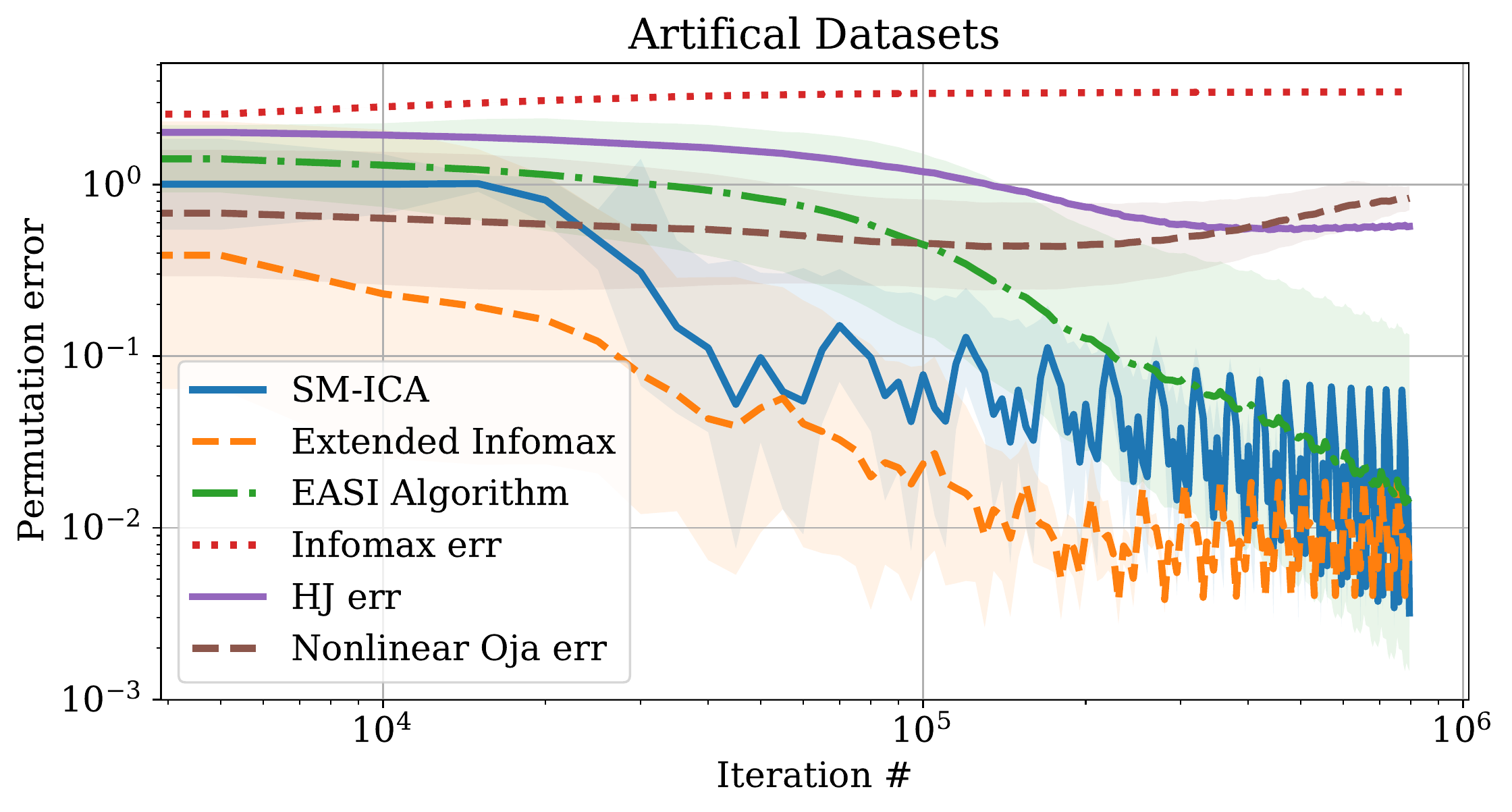} }}%
    \caption{Comparison of the models on mixture of both sub- and super-Gaussian distribution, with (a) whitened data, and (b) colored data.  }%
    \label{fig:super_sub_gaussian}%
\end{figure}

In brief, our model outperforms either outperforms or is competitive with other models on scenarios 2-3-4. 
Out of the other algorithms, EASI performs well on scenarios 1-2 but fails on tasks 3-4, where data are composed of both sub-Gaussian and super-Gaussian distributions. 
The other models are known to require pre-whitening and fail on scenarios 2 and 4, naturally. We have prepared a log-log plot of convergence with error bars that illustrate the results mentioned above.
These results also confirm that our algorithm performs perfect separation with both whitened and colored data and simultaneously with both sub and super-Gaussian sources.

\section{Contrasting reconstruction and similarity-preservation NNs}

We mention in Section 2 and Section 5 of the paper that neural networks trained with Oja's rule lack biological plausibility, unlike NNs resulting from the similarity matching framework. This statement can be unclear for people not too familiar with the approach. For that purpose, we recall the critical distinction between the similarity matching framework and the reconstruction-based approach from which Oja's results. A more detailed description can be found in \cite{pehlevan2019neuroscience,bahroun2019transformation}.

To introduce our notation, the input to the NN is a set of vectors, $\x_t\in \Real^n, t=1,\ldots,T$, with components represented by the activity of $n$ upstream neurons at time, $t$. In response, the NN outputs an activity vector, $\y_t\in \Real^m, t=1,\ldots,T$, where $m$ is the number of output neurons.

\paragraph{Reconstructive Approach:} The reconstruction approach starts with minimizing the squared reconstruction error: 
\begin{equation}\label{eq:coding}
\min_{ {\bf W}, \y_{t=1...T}\in \Real^m} \sum_t|| \x_t-\W \y_t||^2 =    \min_{{\bf W}, \y_{t=1...T}\in \Real^m} \sum_{t=1}^T \Big[\left\Vert \x_t\right\Vert^2 -2\x_t^\top\W \y_t+\y_t^\top \W^\top\W \y_t\Big]~.
\end{equation} 
This objective is optimized offline by a projection onto the principal subspace of the input data. %, of which PCA is a special case \cite{oja1982simplified}. 

In an online setting, the objective can be optimized by alternating minimization \cite{olshausen1996emergence}. After the arrival of data sample, $\x_t$: firstly, the objective \eqref{eq:coding} is minimized with respect to the output, $\y_t$, while the weights, $\W$, are kept fixed, secondly, the weights are updated according to the following learning rule derived by a gradient descent with respect to $\W$ for fixed $\y_t$: 
\begin{align}
      \dot \y_t  =  \W^\top_{t-1}\x_t - \W^\top_{t-1}\W_{t-1}\y_t, \label{oja_multineuron}
\;\;\;\;\;\;\;
  \W_t  \longleftarrow  \W_{t-1} + \eta \left(\x_t - \W_{t-1}\y_t\right)\y_t^\top,
\end{align}
In the NN implementations of the algorithm  \eqref{oja_multineuron}, the elements of matrix $\W$ are represented by synaptic weights and principal components by the activities of output neurons $y_j$, Fig. \ref{fig:SMnet}a \cite{oja1992principal}.

However, implementing update \eqref{oja_multineuron}right in the single-layer NN architecture, Fig. \ref{fig:SMnet}a, requires non-local learning rules making it biologically implausible. Indeed, the last term in \eqref{oja_multineuron}right implies that updating the weight of a synapse requires the knowledge of output activities of all other neurons which are not available to the synapse. Moreover, the matrix of lateral connection weights, $-\W^\top_{t-1}\W_{t-1}$, in the last term of \eqref{oja_multineuron}left is computed as a Gramian of feedforward weights; a non-local operation. This problem is not limited to PCA and arises in nonlinear NNs as well \cite{olshausen1996emergence,lee1999learning}.

%Whereas NNs with local learning rules have been proposed \cite{olshausen1996emergence} their two-layer feedback architecture is not consistent with most biological sensory systems with the exception of olfaction \cite{koulakov2011sparse}. Most importantly, such feedback architecture seems inappropriate for motion detection which requires speedy processing of streamed stimuli.

\paragraph{Similarity Matching Approach:} 
To address these difficulties, \cite{pehlevan2015hebbian} derived NNs from similarity-preserving objectives, as presented in Section 3. Such objectives require that similar input pairs, $\x_t$ and $\x_{t'}$, evoke similar output pairs, $\y_t$ and $\y_{t'}$. If the similarity of a pair of vectors is quantified by their scalar product, one such objective is similarity matching (SM):
\begin{equation}
\min_{\forall t \in\{1,\ldots,T\}: \, \y_t\in \Real^m}\tfrac{1}{2}\sum_{t,t'=1}^T(\x_t \cdot \x_{t'}-\y_t\cdot \y_{t'})^2.
\label{sm}
\end{equation}
This offline optimization problem is also solved by projecting the input data onto the principal subspace \cite{williams2001connection,cox2000multidimensional,mardia1980multivariate}. Remarkably, the optimization problem \eqref{sm} can be converted algebraically to a tractable form by introducing variables $\W$ and $\M$ \cite{Pehlevan2017why}: 
\begin{equation}
\min_{ \{\y_t\in \Real^m\}_{t=1}^T}\min_{\W\in \Real^{n\times m}}\max_{\M \in \Real^{m\times m}}[\sum_{t=1}^T(-2\x_t^\top \W \y_{t}+\y_t^\top\M \y_{t})+T\Tr(\W^\top\W)-\frac{T}{2}\Tr(\M^\top\M)].
\label{lyapunov}
\end{equation}
In the online setting, first, we  minimize \eqref{lyapunov} with respect to the output variables, ${\bf y}_t$, by gradient descent while keeping ${\bf W}$, ${\bf M}$ fixed \cite{pehlevan2015hebbian}:
\begin{align}
\label{grad}
    \dot\y_t=\W^\top \x_t-\M \y_t.
\end{align}
To find $\y_t$ after presenting the corresponding input, $\x_t$, \eqref{grad} is iterated until convergence. After the convergence of $\y_t$, we update ${\bf W}$ and ${\bf M}$ by gradient descent  and gradient ascent respectively \cite{pehlevan2015hebbian}:
\begin{align}
\label{Hebb}
W_{ij} \leftarrow W_{ij} + \eta \left(x_i y_j-W_{ij}\right), \qquad M_{ij} \leftarrow M_{ij} + \eta\left(y_iy_j-M_{ij}\right).
\end{align}
Algorithm \eqref{grad},~\eqref{Hebb} can be implemented by a biologically plausible NN, Fig. \ref{fig:SMnet}b. As before, activity (firing rate) of the upstream neurons encodes input variables, ${\bf x}_t$. Output variables, ${\bf y}_t$, are computed by the dynamics of activity \eqref{grad} in a single layer of neurons. The elements of matrices ${\bf W}$ and ${\bf M}$ are represented by the weights of synapses in feedforward and lateral connections respectively. The learning rules \eqref{Hebb} are local, i.e. the weight update, $\Delta W_{ij}$, for the synapse between $i^{\rm th}$ input neuron and $j^{\rm th}$ output neuron depends only on the activities, $x_i$, of $i^{\rm th}$ input neuron and, $y_j$, of $j^{\rm th}$ output neuron, and the synaptic weight. Learning rules \eqref{Hebb} for synaptic weights ${\bf W}$ and ${\bf -M}$ (here minus indicates inhibitory synapses, see Eq.\eqref{grad}) are Hebbian and anti-Hebbian respectively.

\begin{figure}[!ht]
\centering
\includegraphics[width=0.7\textwidth]{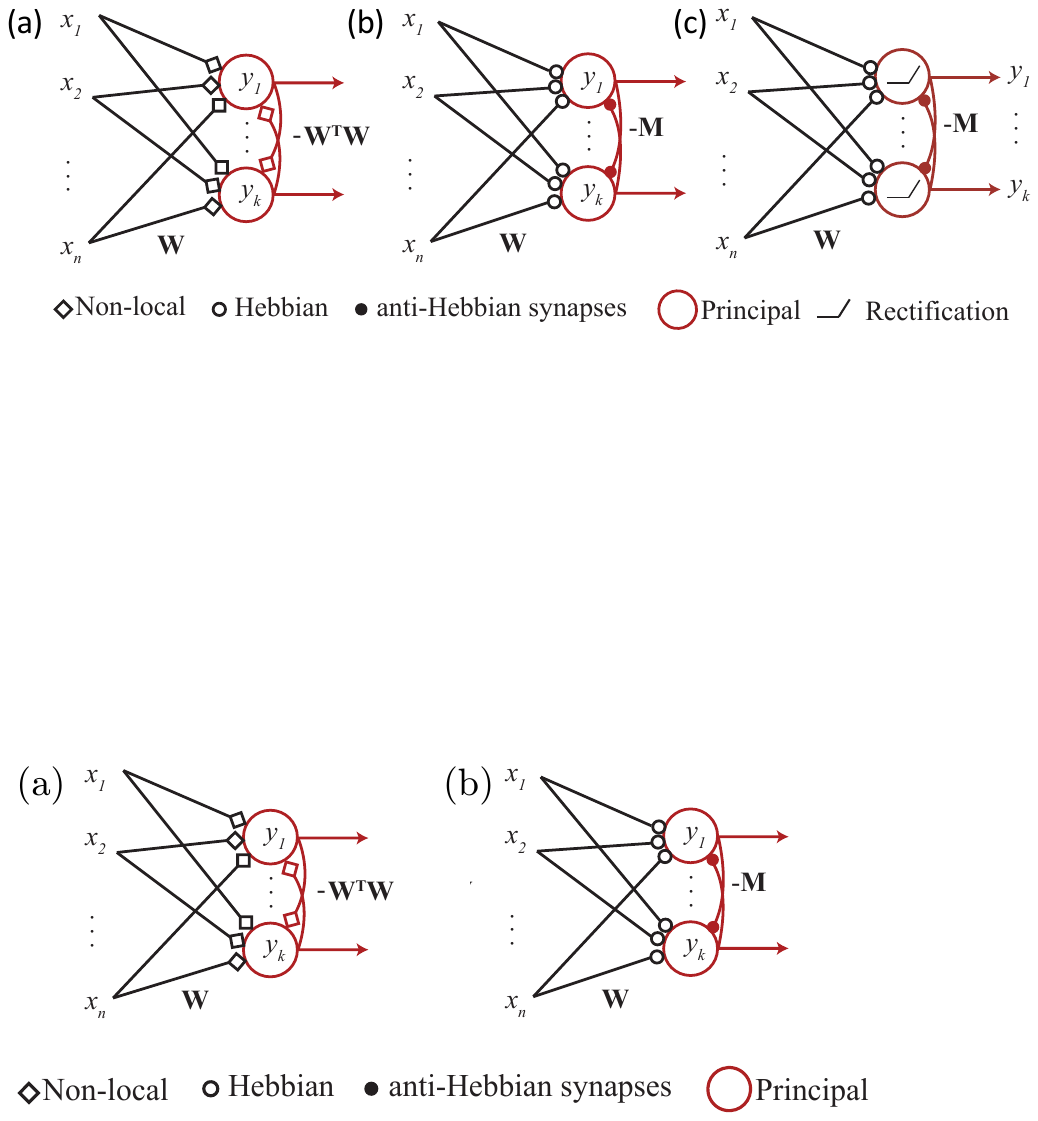}
\caption{Single-layer NNs performing online (a) reconstruction error minimization \eqref{eq:coding} \cite{oja1992principal,olshausen1996emergence}, (b) similarity matching (SM) \eqref{sm} \cite{pehlevan2015hebbian}. }\label{fig:SMnet}
\end{figure}

\paragraph{Comparison between the two approaches:} 
We now compare the objective functions of the two approaches. After dropping invariant terms, the reconstructive objective function has the following interactions among input and output variables:
$-2\x_t^\top\W \y_t+\y_t^\top \W^\top\W \y_t$  (Eq \ref{eq:coding}). The SM approach leads to $-2\x_t^\top \W \y_{t}+\y_t^\top\M \y_{t}$, ( Eq \ref{lyapunov}). The term linear in $\y_t$, a cross-term between inputs and outputs,  $-2\x_t^\top \W \y_{t}$, is common in both approaches and is responsible for projecting the data onto the principal subspace via the feedforward connections in Fig.1ab. The  terms quadratic in $\y_t$'s decorrelate  different output channels via a competition implemented by the lateral connections in Fig.1ab and are different in the two approaches. In particular, the inhibitory interaction between neuronal activities $y_j$ in the reconstruction approach depends upon $\W^\top \W$, which is tied to trained $\W$ in a non-local way. In contrast, in the SM approach the inhibitory interaction matrix $\M$ is learned for $y_j$'s via a local anti-Hebbian rule.

\section{Paper Checklist}

%%% BEGIN INSTRUCTIONS %%%
The checklist follows the references.  Please
read the checklist guidelines carefully for information on how to answer these
questions.  For each question, change the default \answerTODO{} to \answerYes{},
\answerNo{}, or \answerNA{}.  You are strongly encouraged to include a {\bf
justification to your answer}, either by referencing the appropriate section of
your paper or providing a brief inline description.  For example:
\begin{itemize}
  \item Did you include the license to the code and datasets? \answerYes{See Section~6} %~\ref{gen_inst}
  \item Did you include the license to the code and datasets? \answerNo{The code and the data will be made publicly available but for now are included with the submission.}
  \item Did you include the license to the code and datasets? \answerNA{}
\end{itemize}
Please do not modify the questions and only use the provided macros for your answers.  Note that the Checklist section does not count towards the page limit.  In your paper, please delete this instructions block and only keep the Checklist section heading above along with the questions/answers below.
%%% END INSTRUCTIONS %%%

\begin{enumerate}

\item For all authors...
\begin{enumerate}
  \item Do the main claims made in the abstract and introduction accurately reflect the paper's contributions and scope?
    \answerYes{Yes, we clearly state the setup in which we work,i.e., kurtosis-based linear ICA with sources with finite distinct kurtosis and search for a biologically plausible neural network.}
  \item Did you describe the limitations of your work?
    \answerYes{Yes, we describe extensively the limitations of our model in Sec~7. We also mention possible future research direction}
  \item Did you discuss any potential negative societal impacts of your work?
    \answerNA{We do not forsee any negative impact rather positive one, it can have wide-ranging benefits for helping to manage the adverse effects of neurological disorders. }
  \item Have you read the ethics review guidelines and ensured that your paper conforms to them?
    \answerYes{Yes.}
\end{enumerate}

\item If you are including theoretical results...
\begin{enumerate}
  \item Did you state the full set of assumptions of all theoretical results?
    \answerYes{Yes.}
	\item Did you include complete proofs of all theoretical results?
    \answerYes{We left the proofs in the supplementary materials but gave some intuitions illustrated with toy examples as in Fig.~1\textbf{B}.}
\end{enumerate}

\item If you ran experiments...
\begin{enumerate}
  \item Did you include the code, data, and instructions needed to reproduce the main experimental results (either in the supplemental material or as a URL)?
    \answerYes{We include the code to the supplemental material.}
  \item Did you specify all the training details (e.g., data splits, hyperparameters, how they were chosen)?
    \answerYes{Yes, we explain how data are generated, and where the datasets used are available as well as training parameters.}
	\item Did you report error bars (e.g., with respect to the random seed after running experiments multiple times)?
    \answerYes{For Figs. in the supplementary material for comparison with existing non-biologically plausible models.}
	\item Did you include the total amount of compute and the type of resources used (e.g., type of GPUs, internal cluster, or cloud provider)?
    \answerYes{We only run on a basic computer as the model is online and is designed to not require much computing power and memory.}
\end{enumerate}

\item If you are using existing assets (e.g., code, data, models) or curating/releasing new assets...
\begin{enumerate}
  \item If your work uses existing assets, did you cite the creators?
    \answerYes{In Sec~6}
  \item Did you mention the license of the assets?
    \answerNo{All data used either synthetic data or publicly available.}
  \item Did you include any new assets either in the supplemental material or as a URL?
    \answerNo{No new assets aside from our code of the new algorithm.}
  \item Did you discuss whether and how consent was obtained from people whose data you're using/curating?
    \answerNA{}
  \item Did you discuss whether the data you are using/curating contains personally identifiable information or offensive content?
    \answerNA{}
\end{enumerate}

\item If you used crowdsourcing or conducted research with human subjects...
\begin{enumerate}
  \item Did you include the full text of instructions given to participants and screenshots, if applicable?
    \answerNA{}
  \item Did you describe any potential participant risks, with links to Institutional Review Board (IRB) approvals, if applicable?
    \answerNA{}
  \item Did you include the estimated hourly wage paid to participants and the total amount spent on participant compensation?
    \answerNA{}
\end{enumerate}

\end{enumerate}

\end{document}